\documentclass{article}

\usepackage[colorlinks,
           linkcolor=red,
           citecolor=blue,
           urlcolor=magenta,
           linktocpage,
           plainpages=false]{hyperref}

\PassOptionsToPackage{numbers, compress}{natbib}
\usepackage{graphicx} 
\usepackage{algorithm}
\usepackage{algorithmic}
\usepackage{amsthm}
\usepackage{amsmath}
\usepackage{amssymb}
\usepackage{bm}
\usepackage{bbm}
\usepackage{mathtools}
\usepackage{xspace}
\usepackage{wrapfig}
\usepackage{bookmark}

\usepackage[final]{nips_2016}

\usepackage[utf8]{inputenc} 
\usepackage[T1]{fontenc}    
\usepackage{hyperref}       
\usepackage{url}            
\usepackage{booktabs}       
\usepackage{amsfonts}       
\usepackage{nicefrac}       
\usepackage{microtype}      

\usepackage[disable]{todonotes}
\definecolor{babyblue}{rgb}{0.54, 0.81, 0.94}
\definecolor{citrine}{rgb}{0.89, 0.82, 0.04}
\definecolor{misocolor}{rgb}{0.16,0.27,0.86}

\providecommand{\medcap}{\bigcap}
\DeclareMathOperator*{\argmax}{arg\,max}

\newcommand{\ceil}[1]{\left\lceil#1\right\rceil}
\newcommand{\floor}[1]{\left\lfloor#1\right\rfloor}

\newtheorem{lemma}{Lemma}
\newtheorem{theorem}{Theorem}
\newtheorem{definition}{Definition}

\newtheorem{remark}{Remark}

\newcommand{\R}{\mathbb{R}}

\newcommand{\EE}[1]{\mathbb{E}\left[#1\right]}

\newcommand{\EEs}[2]{\mathbb{E}_{#1}\left[#2\right]}

\newcommand{\PP}[1]{\mathbb{P}\left[#1\right]}

\newcommand{\pa}[1]{\left(#1\right)}

\newcommand{\ac}[1]{\left\{#1\right\}}

\DeclarePairedDelimiter\abs{\lvert}{\rvert}%

\newcommand{\cG}{\mathcal{G}}

\newcommand{\cL}{\mathcal{L}}

\newcommand{\cO}{\mathcal{O}}
\newcommand{\tcO}{\widetilde{\cO}}

\newcommand{\cX}{\mathcal{X}}

\newcommand{\eps}{\varepsilon}
\renewcommand{\epsilon}{\varepsilon}
\renewcommand{\hat}{\widehat}

\newcommand{\nothere}[1]{}

\usepackage{xspace}

\newcommand{\UCB}{\texttt{UCB}\xspace}

\newcommand{\UCT}{\texttt{UCT}\xspace}

\newcommand{\olop}{\texttt{OLOP}\xspace}
\newcommand{\stopalgo}{\texttt{StOP}\xspace}
\newcommand{\metagrill}{\texttt{\textcolor[rgb]{0.5,0.2,0}{\textup{TrailBlazer}}}\xspace}

\newcommand{\maxn}{\texttt{\textup{\textcolor[rgb]{0.54, 0.1, 0.4}{MAX}}}\xspace}
\newcommand{\avgn}{\texttt{\textup{\textcolor[rgb]{0.54, 0.1, 0.4}{AVG}}}\xspace}

\usepackage{xifthen}

\newcommand{\Child}[1]{\mathcal{C}\left[#1\right]}
\newcommand{\Value}[1]{\mathcal{V}\left[#1\right]}
\newcommand{\Tree}{\mathcal{T}}
\newcommand{\K}{K}
\newcommand{\N}{N}

\newcommand{\reward}[1]{r_{#1}}
\newcommand{\trans}[1]{\tau_{#1}}
\newcommand{\result}{\mu}

\newcommand{\vfunc}{V}
\newcommand{\ncall}{n}
\newcommand{\nalg}{m}
\newcommand{\estimate}{\mu}
\newcommand{\conf}{\texttt{\textup{U}}}
\newcommand{\klocal}{k}
\newcommand{\elocal}{e}
\newcommand{\slocal}{s}
\newcommand{\tlocal}{t}
\newcommand{\qfun}{q}
\newcommand{\freevar}{t}
\newcommand{\sP}[1]{p\left(#1\right)}
\newcommand{\noset}{\mathcal{N}}

\let\oldeps\epsilon
\let\olddelta\delta
\let\oldgamma\gamma
\renewcommand{\epsilon}{\textcolor[rgb]{0.1,0.1,0.7}{\oldeps}}
\renewcommand{\eps}{\textcolor[rgb]{0.1,0.1,0.7}{\oldeps}}
\renewcommand{\delta}{\textcolor[rgb]{0.1,0.7,0.7}{\olddelta}}
\renewcommand{\gamma}{\textcolor[rgb]{1, 0.30, 0.04}{\oldgamma}}

\renewcommand{\ln}{\log}

\newcommand{\nocond}[1]{16\frac{\gamma^{(#1)/2}}{\gamma(1-\gamma)}}

\title{Blazing the trails before beating the path: Sample-efficient Monte-Carlo planning}

\author{Jean-Bastien Grill   \hspace{4em}  Michal Valko\\ SequeL team, INRIA Lille - Nord Europe, France\\ \texttt{\small jean-bastien.grill@inria.fr  \hspace{.5em}michal.valko@inria.fr}
\And \hspace{-3em}  R\'emi  Munos\\ \hspace{-3em} Google DeepMind, UK\thanks{on leave from SequeL team, INRIA Lille - Nord Europe, France}\\ \hspace{-3em}\texttt{\small munos@google.com}}

\begin{document}
\maketitle

\begin{abstract} 
You are a robot and you live in a Markov decision process (MDP) with a finite or an
infinite number of transitions from state-action to next states. You got brains and so
you \emph{plan} before you act. Luckily, your roboparents equipped you with a generative
model to do some \emph{Monte-Carlo planning}. The world is waiting for you and you
have no time to waste. You want your planning to be efficient. \emph{Sample-efficient}.
Indeed, you want to exploit the possible structure of the MDP by exploring only a
subset of states reachable by following near-optimal policies. You want guarantees
on sample complexity that depend on a measure of the quantity of near-optimal
states. You want something, that is an extension of Monte-Carlo sampling (for
estimating an expectation) to problems that alternate maximization (over actions)
and expectation (over next states). But you do not want to \textsc{StOP} with exponential
running time, you want something simple to implement and computationally
efficient. You want it all and you want it now. You want \textsc{TrailBlazer}.

\end{abstract} 

\section{Introduction}%
We consider the problem of sampling-based planning in a Markov decision process (MDP) when a {\em generative model} (oracle) is available. This approach, also called Monte-Carlo planning or Monte-Carlo tree search (see e.g., \cite{kocsis2006bandit}), has been popularized in the game of computer Go \citep{coulom2007efficient,gelly2006modifications,silver2016mastering} and shown impressive performance in many other high dimensional control and game problems~\citep{browne2012survey}. In the present paper, we provide a sample complexity analysis of a new algorithm called \metagrill.

Our assumption about the MDP is that we possess a generative model which can be called from any state-action pair to generate rewards and transition samples. Since making a call to this generative model has a cost, be it a numerical cost expressed in CPU time (in simulated environments) or a financial cost (in real domains), our goal is to \emph{use} this \emph{model} as \emph{parsimoniously} as possible.

Following \emph{dynamic programming}  \citep{bellman1957dynamic},  planning can be reduced to an approximation of the (optimal) value function, defined as the maximum of the expected sum of discounted rewards:
$\EE{\textstyle\sum_{t \geq 0} \gamma^t r_t}\!,$
where $\gamma\in[0,1)$ is a known \emph{discount factor}. Indeed, if an $\epsilon$-optimal approximation of the value function at any state-action pair is available, then the policy corresponding to selecting in each state the action with the highest approximated value will be $\cO\pa{\epsilon/\pa{1-\gamma}}$-optimal \citep{bertsekas1996neuro-dynamic}.

Consequently, in this paper, we focus  on a near-optimal approximation of the value function \emph{for a single given state} (or state-action pair). In order to assess the performance of our algorithm we measure its  {\em sample complexity} defined as the number of oracle calls, given that we guarantee its \emph{consistency}, i.e.,  that with probability at least $1-\delta$, \metagrill returns an $\epsilon$-approximation of the value function as required by the probably approximately correct (PAC) framework.

We use a \emph{tree representation} to represent the set of states that are reachable from any initial state. This tree alternates maximum (\maxn) nodes (corresponding to actions) and average (\avgn) nodes (corresponding to the random transition to next states). We assume the number $\K$ of actions is finite. However, the number $\N$ of possible next states is either \emph{finite} or \emph{infinite} (which may be the case when the state space is infinite), and we will report results in both the finite $\N$  and the infinite case. The root node of this planning tree represents the current state (or a state-action) of the MDP and its value is the maximum (over all policies defined at \maxn nodes) of the corresponding expected sum of discounted rewards. Notice that by using a tree representation, we do not use the property that some state of the MDP can be reached by  different paths (sequences of states-actions). Therefore, this state will be represented by different nodes in the tree. We could potentially merge such duplicates to form a graph instead. However, for simplicity, we choose not to merge these duplicates and keep a tree, which could make the planning problem harder.
To sum up, our goal is to return, with probability $1-\delta$, an $\eps$-accurate value of the root node of this planning tree while using as low number of calls to the oracle as possible. Our contribution is an algorithm called \metagrill whose sampling strategy depends on the specific structure of the MDP and for which we provide \emph{sample complexity} bounds in terms of a new \emph{problem-dependent measure of the quantity of near-optimal nodes}. 
Before describing  our contribution in more detail we first relate our setting to what has been around.

\subsection{Related work}
\citet{kocsis2006bandit} introduced the \UCT algorithm (upper-confidence bounds for trees). \UCT is efficient in computer Go \citep{coulom2007efficient,gelly2006modifications,silver2016mastering} and a number of other control and game problems~\citep{browne2012survey}. \UCT is based on generating trajectories by selecting in each \maxn node the action that has the highest upper-confidence bound, computed according to the \UCB algorithm of~\citet{auer2002finite}. \UCT converges asymptotically to the optimal solution, but its sample complexity can be worse than doubly-exponential 
in\footnote{
all guarantees in the form of $1/\epsilon^c$ are up to a poly-logarithmic multiplicative factor} $(1/\epsilon)$ for some MDPs \citep{munos2014from}. One reason for this is that the algorithm can expand very deeply the apparently best branches but may lack sufficient exploration, especially when a narrow optimal path is hidden in a suboptimal branch. As a result, this approach works well in some problems with a  specific structure but may be much worse than a uniform sampling in other problems.

On the other hand, a uniform planning approach is safe for all problems. \citet{kearns1999sparse} generate a sparse look-ahead tree based on expanding all \maxn nodes and sampling a finite number of children from \avgn nodes up to a fixed depth that depends on the desired accuracy $\epsilon$. Their sample complexity is\footnote{neglecting exponential dependence in $\gamma$} of the order of $(1/\epsilon)^{\log (1/\epsilon)}$, which is \emph{non-polynomial} in~$1/\eps$. This bound is better than that for \UCT in a worst-case sense. However, as their look-ahead tree is built in  a \emph{uniform} and {\em non-adaptive} way, this algorithm fails to benefit from a potentially favorable structure of the MDP. 

An improved version of this sparse-sampling algorithm by \citet{walsh2010integrating} cuts suboptimal branches in an adaptive way but unfortunately does not come with an improved bound and stays non-polynomial even in the simple Monte Carlo setting for which $\K=1$.

Although the sample complexity is certainly non-polynomial in the worst case, it \emph{can be polynomial} in some specific problems.
First, for the case of finite $\N$, the sample complexity is polynomial and \citet{szorenyi2014optimistic} show that a uniform sampling algorithm has complexity at most $(1/\epsilon)^{2+\log(\K\N)/(\log (1/\gamma))}$. Notice that the product $\K\N$ represents the \emph{branching factor} of the look-ahead planning tree. This bound could be improved for problems with specific reward structure or transition smoothness. In order to do this, we need to design non-uniform, \emph{adaptive} algorithm that captures the possible structure of the MDP when available, while making sure that in the worst case, we do not perform worse than a uniform sampling algorithm. 

The case of deterministic dynamics ($N=1$) and rewards considered by \citet{hren2008optimistic} has a complexity of order $(1/\epsilon)^{(\log \kappa)/(\log (1/\gamma))}$, where $\kappa\in [1,\K]$ is the branching factor of the subset of near-optimal nodes.\footnote{nodes that need to be considered in order to return a near-optimal approximation of the value at the root} The case of stochastic rewards has been considered by \citet{bubeck2010open} but with the difference that the goal was not to approximate the optimal value function but the value of the best {\em  open-loop} policy which consists in a sequence of actions independent of states. Their sample complexity is $(1/\epsilon)^{\max(2, (\log \kappa)/(\log 1/\gamma))}.$

In the case of general MDPs, \citet{busoniu2012optimistic} consider the case of a fully known model of the MDP. For any state-action, the model returns the expected reward and the set of all next states (assuming $\N$ is finite) with their corresponding transition probabilities. In that case, the complexity is $(1/\epsilon)^{\log \kappa/(\log (1/\gamma))}$, where $\kappa\in [0,\K\N]$ can again be interpreted as a branching factor of the subset of near-optimal nodes. These approaches use the \emph{optimism in the face of uncertainty} principle whose applications to planning 
have been studied by~\citet{munos2014from}.
\metagrill is different. It is \emph{not optimistic} by design: To avoid voracious demand for samples it does not balance the upper-confidence bounds of all possible actions.
This is crucial for polynomial sample complexity in the infinite case. The whole Section~\ref{sec:cogs} shines many rays of intuitive light on this single and powerful idea.

The work that is most related to ours is \stopalgo by~\citet{szorenyi2014optimistic} which considers the planning problem in MDPs with a generative model. 
Their complexity bound is of the order of $(1/\epsilon)^{2 + \log \kappa/(\log (1/\gamma))+o(1)}$, where $\kappa\in [0,\K\N]$ is a problem-dependent quantity. 
However, their $\kappa$ defined as $\lim_{\epsilon\rightarrow 0} \max(\kappa_1,\kappa_2)$ (in their Theorem 2) is somehow difficult to interpret as a measure of the quantity of \emph{near-optimal nodes}. Moreover, \stopalgo is not computationally efficient as it requires to identify the \emph{optimistic policy} which requires computing an upper bound on the value of {\em any} possible policy, whose number is exponential in the number of \maxn nodes, which itself is exponential in the planning horizon. Although they suggest (in their Appendix F) a computational improvement, this version is not analyzed. Finally, unlike in the present paper, \stopalgo does not consider the case $N=\infty$. 

\subsection{Our contributions}
Our main result is \metagrill, an algorithm with a bound on the number of samples required to return a high-probability $\epsilon$-approximation of the root node whether the number of next states $N$ is finite or infinite. The bounds use a problem-dependent quantity ($\kappa$ or $d$) that measures the quantity of near-optimal nodes. We now
summarize the results.

\textbf{\underline{Finite} number of next states} ($\N<\infty$): The sample complexity of \metagrill is of the order of\footnote{neglecting logarithmic terms in $\epsilon$ and $\delta$}
$( 1/\epsilon )^{\max(2,\log(N\kappa)/\log (1/\gamma)+o(1)) },$
where $\kappa\in[1,\K]$ is related to the branching factor of the set of near-optimal nodes (precisely defined later).

\textbf{\underline{Infinite} number of next states ($\N=\infty$)}: The complexity of \metagrill is 
$( 1/ \epsilon)^{2+d},$ where $d$ is a measure of the difficulty to identify the near-optimal nodes. Notice that $d$ can be finite even if the planning problem is very challenging.\footnote{since when $N=\infty$ the actual branching factor of the set of reachable nodes is infinite} 
We also state our contributions in specific settings in comparison to previous work.
\begin{itemize} 
\item For the case $N<\infty$, we improve over the best-known previous worst-case bound with an exponent of $1/\epsilon$) to $\max(2,\log(NK)/\log (1/\gamma))$ instead of $2+\log(NK)/\log (1/\gamma)$ reported by \citet{szorenyi2014optimistic}.
 \item For the case $N=\infty$, we identify properties of the MDP (when $d=0$) under which the sample complexity is of order of $1/\epsilon^2$. This is the case when there are non-vanishing action-gaps\footnote{defined as the difference in values of best and second-best actions} from any state along near-optimal policies or when the probability of transitioning to nodes with gap $\Delta$ is upper bounded by $\Delta^2$. 
 This complexity bound is as good as Monte-Carlo sampling and for this reason {\bf \metagrill is a natural extension of Monte-Carlo sampling} (where all nodes are \avgn) {\bf to stochastic control problems} (where \maxn and \avgn nodes alternate). Also, no previous algorithm reported a polynomial bound when $\N=\infty$. 
 \item In MDPs with deterministic transitions ($N=1$) but stochastic rewards our bound is $( 1/\epsilon)^{\max(2,\log \kappa/(\log 1/\gamma))}$ which is similar to the bound achieved by \citet{bubeck2010open} in a similar setting (open-loop policies).
\item In the evaluation case without control ($\K=1$) \metagrill behaves exactly as Monte-Carlo sampling (thus achieves a complexity of $1/\epsilon^2$), even in the case $N=\infty$. 
 \item Finally, \metagrill is easy to implement and is numerically efficient.
\end{itemize}





\section{The algorithm}

\paragraph{The objective}

We begin by defining our notations. We operate on a tree $\Tree$, where each node from the root down is alternatively either an average (\avgn) or a maximum (\maxn) node. 
For any node~$s$, let $\Child{s}$ be the set of its children. We consider only trees $\Tree$ for which the cardinality of $\Child{s}$ for any \maxn node~$s$ is bounded by $\K$.  In the paper, we consider trees with potentially unbounded number of children for an  \avgn node. As a special case, however, we will also 
consider finite trees. If the cardinality of $\Child{s}$ is also bounded for any \avgn node $s$ then we denote such bound as~$\N$.  Our objective is to provide performance guarantees in the case $\N = \infty$ and possibly tighter, $\N$-dependent guarantees in the case of $\N < \infty$.

Each \avgn node $s$ is associated with a random variable $\trans{s} \in \Child{s}$ and for any $s'\in\Child{s}$ we define \[ p\left(s'|s\right) = \PP{\trans{s} = s'}. \] 
Furthermore, for any node $s'$ and for any of its ancestors $s$, we define  $p\left(s'|s\right)$ as
the product of the transition probabilities between the two nodes.
Any \avgn node $s$ is also provided with a random variable $\reward{s}\in  [0,1]$. 
For any node $s$, we define the value function $\vfunc(s)$:  If $s$ is an \avgn node and $\gamma\in(0,1)$ the discount factor, then
\[\vfunc(s) = \EE{\reward{s}} + \gamma \sum_{s'\in\Child{s}} p(s'|s)\vfunc(s').\]
If $s$ is a \maxn node, $\vfunc(s)$ 
is defined as the maximum value among its children 
\[\vfunc(s) = \max_{s'\in\Child{s}} \vfunc(s').\]
The planner has an access to the generative model that can be called for any \avgn node $s$ to either get a reward $\reward{}$ or a transition $\trans{}$ which are two  independent random variables identically distributed as $\reward{s}$ and $\trans{s}$. 
\noindent 
Using the notation above, our goal is given the root $s_0$ of $\Tree$ to compute its value $\vfunc(s_0)$. More precisely, given any $\delta$ and $\epsilon$, the objective of the planner is  to output a value $\result$ such that 
\[ \PP{\abs{\result - \vfunc(s_0)} > \epsilon} \le \delta\]
and such that the number of calls to the generative model $\ncall\pa{\epsilon, \delta}$ is as low as possible.

\subsection{Description of the algorithm}

\metagrill constructs a planning tree which is, at any time, a finite subset of the 
potentially infinite tree $\Tree$. Only the nodes that were accessed are explicitly represented in memory.
Each node of $\Tree$ is a persistent object with its own memory, which it may update upon being called by its parent.
Therefore, any node can be potentially called several times 
(with different parameters) during the execution of \metagrill and it will reuse (some of) its stored samples.
In particular, after node~$s$ receives a call from its parent $s'$, node $s$ performs internal computation and 
potentially calls its children in order to return a real value to its parent $s'$. 

We show the pseudocode of \metagrill in Algorithm~\ref{alg:meta} along with the 
subroutines for \maxn nodes in Algorithm~\ref{alg:max} and \avgn nodes in Algorithm~\ref{alg:avg}. 
Whether a node is \maxn or \avgn,  it is always called with two parameters $\nalg$ and $\epsilon$.
The parameter $m$ essentially controls the requested variance of the output and parameter $\epsilon$ controls the requested maximum bias. 

\paragraph{\maxn nodes: }
A \maxn node $s$ keeps  a lower and an upper bound of its 
children values which with high probability simultaneously hold  at all time. It sequentially calls its children with different parameters 
in order to get more and more precise estimates on their values. 
Whenever the upper bound of one child becomes lower than the maximum lower bound, 
this child is discarded. This process can stop in two ways: 1) The set~$\cL$ of the remaining children shrunk enough such that there is only one child remaining in it. In this case, it calls this child with the same parameters $s$ was  called and use this output to return up as a result. 2) The precision we have on the value of the remaining children is high enough. In this case, it returns the highest estimate of the children in $\cL$. 

\paragraph{\avgn nodes: } Every \avgn node keeps a list of all the children that it already sampled and a reward estimate  $r \in \R$. Note that the list may contain
the same child multiple times. After receiving a call with parameters $(\nalg,\epsilon)$ it checks if $\epsilon \ge 1/(2(1-\gamma))$. If this condition is verified, then
it returns $1/(2(1-\gamma))$. If not, it considers the first $m$ sampled children (and potentially samples more children from the generative model if needed). For every child $s'$ in this list, the parent calls it with parameters $(k,\epsilon/\gamma)$, where $k$ is the number of times a transition toward this child was sampled. It returns $r + \gamma\mu$, where $\mu$  is the average of all the children estimates.

\begin{algorithm}[t]
\begin{algorithmic}
\STATE {\bf Input:}  $\delta$, $\epsilon$ 
\STATE {\bf Initialization:} 
 SampledNodes $\gets \emptyset$,
$r\gets0$, $\mu \gets 0$
\STATE {\bf Run:} 
\IF{$\epsilon \ge 1/(2(1-\gamma))$}
\STATE {\bf Output: $1/(2(1-\gamma))$} 
\ENDIF{}
\WHILE{ $\left| \text{SampledNodes} \right|  < \nalg$}
	\STATE sample a new next state and add it to SampledNodes
	\STATE sample a new reward and update $r$ as the average of all sampled rewards
\ENDWHILE{}
\STATE ActiveNodes $\gets$ SampledNodes$(1:m)$
\FOR{all unique nodes $s\in$ ActiveNodes}
	\STATE $k \gets$ number of occurrences of $s$ in ActiveNodes
	\STATE $\nu \gets$ call $s$ with parameters $(k, \epsilon/\gamma)$
	\STATE $\mu \gets \mu + \nu k/n$
\ENDFOR{}
\STATE {\bf Output:} $r + \gamma\mu$
\end{algorithmic}
\caption{\avgn node}
\label{alg:avg}
\end{algorithm}

\begin{algorithm}[H]
\begin{algorithmic}
\STATE {\bf Input:}  $\delta$, $\epsilon$ 
\STATE {\bf Initialization:} 
\STATE $\eta\gets \gamma^{1/\max(2,\ln(1/\epsilon))}$ 
\STATE $k_i  = 0, \text{ and } \mu_i  = 0, \forall i=1\dots K$ 
\STATE $\mathcal{L}  \gets\{ i=1\dots K \}$
\WHILE{$\left|\left\{i\in\mathcal{L} : U_i > \epsilon\right\}\right| > 1$}
	\STATE $l\gets\arg\min_i \left\{k_i\text{ : }\mu_i + 2U_i \ge \sup_j \left[\mu_j - 2U_j\right]\right\}$
	\STATE $k_l\gets k_l +1$
	\STATE $\conf_l \gets \frac{4}{(1-\eta)(1-\gamma)}\sqrt{\frac{\ln{(t/\delta)}}{k_l}}$
	\STATE $\mu_l\gets$ call  child $l$ with parameters $(k_l, \eta \max(U_l,\epsilon))$
	\STATE  $\mathcal{L}  \gets\left\{i\text{ : }\mu_i + 2U_i \ge \sup_j \left[\mu_j - 2U_j\right]\right\}$
\ENDWHILE{}
\IF{$\abs{\cL} = 1$}
\STATE   {\bf Output:} $\mu \gets$ call  child $l^\star$ with parameters $\pa{\nalg,\epsilon}$ where $l^\star = \argmax_{l\in\mathcal{L}} \mu_l$
\ELSE
\STATE  {\bf Output:} $\mu \gets \max_{i\in\cL} \mu_i$
\ENDIF
\end{algorithmic}
\caption{\maxn node}
\label{alg:max}
\end{algorithm}

\begin{algorithm}[H]
\begin{algorithmic}
\STATE {\bf Input:} 
 $\delta$ (global parameter), $\epsilon$ 
\STATE {\bf Set:} 
$\nalg\gets \ln(1/\delta)/((1-\gamma)^2\epsilon^2)$
\STATE {\bf Output:} $\result \gets $ call the root with parameters $(m,\epsilon/2)$
\end{algorithmic}
\caption{\metagrill}
\label{alg:meta}
\end{algorithm}


\section{Key ideas}\label{sec:cogs}
In this section, we explain the main ideas behind \metagrill. First we give some intuition about the \emph{two parameters} which measure the requested \emph{precision} of a call. The output estimate~$\mu$ of any call with parameters $(\epsilon, m)$ verifies the following property (conditioned on a high probability event)

\begin{align}
\forall\lambda\hspace{5mm}\EE{e^{\lambda\pa{\mu - v_s}}} \le \exp{\pa{ \epsilon\abs{\lambda} + \frac{\sigma^2\lambda^2}{2} }},
\quad\text{ with } \sigma^2 = \cfrac{1}{m(1-\gamma)^2}.
\end{align}
The $\epsilon$-factor  in the first term in the exponent is the upper bound on absolute value of the bias of our estimate~$\mu$
\[\abs{\EE{\mu}-V(s)}\leq\epsilon.\] 
 The second term in the exponent is a variance  term. 
The purpose of this property is to distinguish between variance and bias. This property is the core of 
Lemma~\ref{lemma:induc}, which is proved by induction\footnote{Having the proof by bottom-up induction opens a way to extend our result from the single-agent planning that we consider in the present to a two-agent planning with zero sum rewards.} on tree $\Tree$. Applying the Lemma~\ref{lemma:induc} to the root with the values of $\delta, \epsilon$, and $m$ given by Algorithm~\ref{alg:meta} immediately implies Theorem~\ref{thm:cons}. 

\subsection{Separate handling of bias and variance}

Instead of maintaining two parameters,  a common solution is to keep only a single parameter, representing the 
 width of a high probability confidence set. This confidence term typically sums the bias and a term proportional to the standard deviation. 
Notice, however, that \avgn node computes the average of \emph{independent} estimates. 
Therefore, while the bias just sums up,  it is the \emph{square} of the standard deviation which we sum to get the final variance of an \avgn node. 
Therefore, only a single confidence interval is not enough because summing them would lead to sum of a standard deviations instead of their squares.
This leads to a problem in the planning setting as we would like to average high-variance and low-bias estimates.
 In \metagrill, the nodes can perform high-variance and low-bias queries but can also query for both low-variance and low-bias. 
\metagrill treats these two types of queries differently. 

\subsection{Picking the champion paths}
We start with a simple illustration. First, imagine we want to compute an estimate of the expected value of some random variable $X \sim \cX$, such that $X \in [0,1]$ based on i.i.d.\@ samples $X_i \sim \cX$. 
In the second case,  we modify the problem such that instead of computing the estimate of $\EE{X}$ based on~$X_i$ we can only access independent but noisy values $Y_i \in [0,1]$, such that $\EE{Y_i} = \EE{X_i}$ and such that the upper bound on the variance of $Y_i$ and $X_i$ are the same.
Even though in the first problem we have the exact values of $X_i$, the guarantees we are able to provide are in general not better than the ones for the second. 

In our planning setting, this means that if an \avgn node needs $m$ independent samples from its child, it is wasteful (it would unnecessarily increase the sample complexity) for this child to perform more than $m$ independent samples. While the additional samples would increase the precision we have for the value of the child, they would not increase the precision we have on the value of the parent, because as in our simple illustration, the upper bound on the variance of the value of the parent stays the same.  This crucial step is done in the subroutine for the  \avgn node with
\[\text{ActiveNodes} \leftarrow \text{SampledNodes}(1:m).\]

Instead of considering all the transitions already sampled, it considers only the first $m$. While additional transitions were useful for some \maxn node parents to decide which action to pick, they should be discarded once this choice is made. Note that they can become useful again if an ancestor becomes unsure about which action to pick and needs more precision to make a choice. This treatment enables us to provide polynomial bounds on the sample complexity for some special cases even in the infinite case. 

\subsection{Tree-based algorithm}

The number of policies the planner can follow is exponential in the number of states. This leads to two major challenges. 
First, reducing the problem to a multi-arm bandits on the set of the policies would be disadvantageous. When a reward is collected from a state, all the policies which could reach that state are affected. Therefore, it is useful to share the information between the policies. 
The second challenge is computational as it is infeasible to keep all potential policies in the memory. 

These two problems immediately vanish with just how \metagrill is formulated. 
Contrary to \citealp{szorenyi2014optimistic}, we do not represent the policies explicitly or update
them simultaneously to share the information, but we store all the information directly in the planning tree we construct.
Indeed, by having all the nodes being independent entities that store their own information, we can share information between policies without having to enforce it explicitly. 

\subsection{Non-adaptive bias}
Whenever an \avgn node needs to furnish an estimate with bias $\epsilon$, it calls its children with parameters $(k,\epsilon)$ with $k$ being number of rounds that the child was sampled.  One could think that it would be more effective to use  a bias $\epsilon(k)$ which is a function of $k$. This would enable the algorithm to ask for a lower bias from the children that have been already sampled many times, and a higher bias from the children that have fewer samples. 
However in this case, the optimal function $\epsilon(k)$ depends on the difficulty of each sub-tree which is unknown. Moreover, in the infinite case the sample complexity is linear with $k$ and then a non-adaptive bias is optimal.


\section{\metagrill is good and cheap --- consistency and sample complexity} 
In this section, we start by our consistency result, stating that \metagrill outputs a correct value in a PAC (probably approximately correct) sense.  Later, we define a measure of the problem difficulty which we use to state our sample-complexity results. We remark that the following consistency result holds whether the state space is finite or infinite.
\begin{theorem}
\label{thm:cons}
For all $\epsilon$ and $\delta$, the output $\result_{\epsilon,\delta}$ of \metagrill called on the root $s_0$ with $(\epsilon, \delta)$  verifies
\[\PP{\abs{\result_{\epsilon,\delta}-\Value{s_0}} > \epsilon} < \delta.\]
\end{theorem}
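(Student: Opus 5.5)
The plan is to derive Theorem~\ref{thm:cons} from the sub-Gaussian-with-bias property stated at the start of Section~\ref{sec:cogs}: on a high-probability event $\mathcal{B}$, the output $\mu$ of any call with parameters $(m,\epsilon)$ on a node $s$ satisfies
\[
\forall\lambda,\quad \EE{e^{\lambda(\mu-\vfunc(s))}}\le \exp\pa{\epsilon\abs{\lambda}+\frac{\lambda^2}{2m(1-\gamma)^2}}.
\]
I would prove this by bottom-up induction on $\Tree$; this is the content of Lemma~\ref{lemma:induc}. Then I would apply it at the root with the parameters $(m,\epsilon/2)$ chosen by Algorithm~\ref{alg:meta} and finish with a Chernoff bound.

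\textbf{Induction.} The base case is the early return. When $\epsilon\ge 1/(2(1-\gamma))$, the constant output $1/(2(1-\gamma))$ is within $1/(2(1-\gamma))\le\epsilon$ of any value in $[0,1/(1-\gamma)]$. The bound then holds with zero variance. Since this return happens at finite depth, the induction is well founded.

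For an \avgn node, the $m$ active transitions are i.i.d. draws of $\trans{s}$, and the reward average is an independent $1/\sqrt{m}$-sub-Gaussian variable. Each distinct child $s'$ appearing $k$ times is called with $(k,\epsilon/\gamma)$. By the induction hypothesis, its output has bias at most $\epsilon/\gamma$ and variance proxy $1/(k(1-\gamma)^2)$. In the average it carries weight $k/m$, and the outputs are multiplied by $\gamma$. The biases therefore add up to $\gamma\cdot\epsilon/\gamma=\epsilon$. The variance proxies add in squares: $\sum (k/m)^2\gamma^2/(k(1-\gamma)^2)=\gamma^2/(m(1-\gamma)^2)$. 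Conditioning on the sampled children and then averaging over the transition randomness adds a term of order $\gamma^2/(m(1-\gamma)^2)$, because children values lie in $[0,1/(1-\gamma)]$. Together with the reward term $1/m$, and using $1+\gamma/(1-\gamma)=1/(1-\gamma)$, the total should fit within $1/(m(1-\gamma)^2)$. Independence between children is needed here, and it holds because each subtree draws its own fresh samples.

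For a \maxn node, I would work on the event $\mathcal{B}$ where every confidence interval $\abs{\mu_i-\vfunc(i)}\le U_i$ is valid simultaneously. This comes from the induction hypothesis at bias $\eta\max(U_i,\epsilon)$ together with a union bound over times $t$ (the source of the $\ln(t/\delta)$ term). On $\mathcal{B}$ the optimal child is never eliminated. If $\abs{\cL}=1$, the output is the call of that child with $(m,\epsilon)$, and the induction hypothesis applies directly. Otherwise, every surviving child has $U_i\le\epsilon$ except possibly one. The returned $\max_{i\in\cL}\mu_i$ is then deterministic given $\mathcal{B}$-level precision and lies within about $\epsilon$ of $\vfunc(s)$. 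The factor $\eta<1$ and the constant $4/((1-\eta)(1-\gamma))$ leave room to absorb both the bias and the fluctuation into the $\epsilon\abs{\lambda}$ term.

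\textbf{Conclusion at the root.} With $\nalg=\ln(1/\delta)/((1-\gamma)^2\epsilon^2)$ and bias $\epsilon/2$, Markov's inequality gives
\[
\PP{\mu-\vfunc(s_0)>\epsilon}\le \exp\pa{-\lambda\epsilon/2+\lambda^2\epsilon^2/(2\ln(1/\delta))}.
\]
Optimizing over $\lambda$ gives $\exp(-\ln(1/\delta)/8)$, and the lower tail is handled the same way. Adding $\PP{\mathcal{B}^c}$ should give the claim once $\delta$ is rescaled by constants inside the logarithms. I expect the main obstacle to be the \maxn step. The algorithm's selections depend on the random outputs of the children, so the final output is not simply the output of one child. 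Making the moment bound hold conditionally on $\mathcal{B}$ while keeping the union bound over the adaptive sequence of calls within $\delta$ is the delicate part, and I would handle it by proving the MGF bound for $\mu\mathbbm{1}_{\mathcal{B}}$.
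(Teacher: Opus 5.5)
Your architecture is the paper's: a bottom-up induction for the biased sub-Gaussian bound of Lemma~\ref{lemma:induc}, then an exponential Markov step at the root (Lemma~\ref{lemma:conf} with $\mathcal{J}=\{s_0\}$). The genuine gap is in the \avgn step, where you assert that everything ``should fit'' within $\sigma_m^2=1/(m(1-\gamma)^2)$. It does not.

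Conditionally on the transitions, the children's errors already use all of the bias and $\gamma^2\sigma_m^2$ of the variance budget; this is your own computation. That leaves only $(1-\gamma^2)\sigma_m^2$ for the one-step fluctuation of $\bar r+\frac{\gamma}{m}\sum_{j\le m}\vfunc(\tau_j)$. This fluctuation is genuinely of order $\sigma_m^2$, not $(1-\gamma)\sigma_m^2$: with zero reward and $\vfunc(\tau)\in\{0,1/(1-\gamma)\}$ equiprobable, its variance is $\gamma^2\sigma_m^2/4$. The induction hypothesis does not prevent the children from saturating their own bound. So the inductive step cannot be closed once $\gamma^2+\gamma^2/4>1$, i.e.\ for $\gamma>2/\sqrt5$. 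Read as adding standard deviations, your identity $1+\gamma/(1-\gamma)=1/(1-\gamma)$ is worse: it spends the whole budget on reward plus children and leaves nothing for the transitions.

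The paper's own \avgn computation does not supply the missing idea. It pulls the $k_i$ out of the expectation as if they were deterministic, has no reward term, and propagates only the children's noise, so it silently drops exactly this term. What does close is the recursion $c=\tfrac14+\gamma^2c$ for an invariant of the form $c/(m(1-\gamma)^2)$, where Hoeffding on $r_j+\gamma\vfunc(\tau_j)\in[0,1/(1-\gamma)]$ supplies the $\tfrac14$. This gives $\sigma_m^2=\frac{1}{4m(1-\gamma)^2(1-\gamma^2)}$, at the price of an $O(1/(1-\gamma))$ factor in the $m$ of Algorithm~\ref{alg:meta}. Alternatively, you can avoid per-level accounting by arguing on whole trajectories. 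In the case $K=1$, for instance, the estimate is exactly an average of $m$ i.i.d.\ truncated returns in $[0,1/(1-\gamma)]$.

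Two smaller points. First, at the root with the $m$ of Algorithm~\ref{alg:meta}, your Chernoff step yields $2\delta^{1/8}+\PP{\mathcal{B}^c}$, not $\delta$. ``Rescaling $\delta$'' means running the algorithm with a larger $m$, so say explicitly that you prove the statement for $m$ enlarged by a constant factor. The paper's route has the same mismatch: with $k=m$, the threshold $\epsilon/2+U$ of Lemma~\ref{lemma:conf} is at least $4.5\epsilon$. Second, in the \maxn step, ``within about $\epsilon$'' must be ``within $\epsilon$'' for the invariant to propagate without loss. The one child of $\cL$ that may keep $U_i>\epsilon$ can be the argmax of the $\mu_i$, so this needs an explicit argument. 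The paper organizes it by a case split on whether $\cG_s=\{i:\vfunc(i)\ge\max_j\vfunc(j)-2\epsilon\}$ is a singleton.
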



\subsection{Definition of the problem difficulty}

We now define a measure of problem difficulty that we use to provide our sample complexity guarantees. We define a set of \textit{near-optimal} nodes such that exploring only this set is enough to compute an optimal policy. 
Let $s'$ be a \maxn node of tree $\Tree$. For any of its descendants $s$, let $c_{\rightarrow s}(s')\in\Child{s'}$ be the child of $s'$ in the path between $s'$ and $s$. For any \maxn node $s$, we define 
\[\Delta_{\rightarrow s}(s') = \max_{x\in\Child{s'}
}\Value{x} - \Value{c_{\rightarrow s}(s')}. \] 
$\Delta_{\rightarrow s}(s')$ is the difference of the sum of discounted rewards stating from $s'$ between an agent playing optimally and one playing first the action toward $s$ and then optimally.  
\begin{definition}[near-optimality]\label{def:near-opt}
We say that a node $s$ of depth $h$ is near-optimal, if for any even depth~$h'$,  
\[\Delta_{\rightarrow s}(s_{h'}) \le \nocond{h-h'}\]
with $s_{h'}$ the ancestor of $s$ of even depth  $h'$. 
Let $\noset_h$ be the set of all near-optimal nodes of depth $h$. 
\end{definition}

\begin{remark}

Notice that the subset of near-optimal nodes contains all required information to get the value of the root. In the case $N=\infty$, when $p(s|s')=0$ for all $s$ and $s'$, then our definition of near-optimality nodes leads to the smallest subset in a sense we specify in Appendix \ref{ss:choiceno}. 
We prove that with probability $1-\delta$, \metagrill only explores near-optimal nodes. Therefore, the size of the subset of near-optimal nodes directly reflects the sample complexity of \metagrill.

\end{remark}

In Appendix \ref{ss:choiceno}, we discuss the negatives of other potential definitions of near-optimality.

\subsection{Sample complexity in the finite case}

We first state our result where the set of the \avgn children nodes is \emph{finite} and bounded by $N$. 

\begin{definition}
We define $\kappa \in [1,K]$ as the smallest number such that 
\[\exists C\ \forall h, \quad \left|\noset_{2h}\right| \le CN^{h}\kappa^{h}.\]
\end{definition}

Notice that since the
 total number of nodes 
 of depth $2h$ is bounded 
 by $(KN)^h$, $\kappa$ is 
 upper-bounded by~$K$, the
  maximum number of \maxn's 
   children. However $\kappa$ can be as low as $1$ in cases when the set of near-optimal nodes is small. 
\begin{theorem}
\label{thm:ncalls}
There exists $C>0$ and $K$ such that for all $\epsilon>0$ and $\delta>0$, with probability $1-\delta$, the sample-complexity of \metagrill (the number of calls to the generative model before the algorithm terminates) is
\[\ncall(\epsilon,\delta) \le C(1/\epsilon)^{\max\pa{2, \frac{\log(N\kappa)}{\ln(1/\gamma)}+o(1)}}\pa{\ln(1/\delta)+\ln(1/\epsilon)}^{\alpha},\]
where $\alpha=5$ when $\log(N\kappa)/\ln(1/\gamma) \ge 2$ and $\alpha=3$ otherwise. 
\end{theorem}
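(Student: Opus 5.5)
We work on the high-probability event $\mathcal{E}$ of Lemma~\ref{lemma:induc} (the event under which every confidence bound $\conf_i$ maintained by every \maxn node is valid simultaneously, which also yields Theorem~\ref{thm:cons}). We lose at most $\delta$ of probability this way. On $\mathcal{E}$ the sample complexity is bounded by two steps. First, localize the explored nodes inside the near-optimal sets $\noset_h$. Second, bound the number of generative-model calls charged to each near-optimal node as a function of its depth and of the parameters $(\nalg,\epsilon)$ it receives.

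\textbf{Step 1: only near-optimal nodes are called.} Take a \maxn node $s'$ that calls a child $c$ with parameters $(k, \eta\max(\conf_c,\epsilon))$. The child $c$ is still in $\cL$, so $\mu_c + 2\conf_c \ge \sup_j[\mu_j - 2\conf_j]$. Validity of the bounds on $\mathcal{E}$ then gives $\Delta_{\rightarrow c}(s') \lesssim \conf_c$ up to a constant. Moreover, the while loop keeps calling only while $\conf_i>\epsilon$ for at least two arms. So the gap at $s'$ is at most of the order of the bias parameter $\epsilon_{s'}$ that $s'$ received, divided by $\eta$. Bias parameters are multiplied by $1/\gamma$ at each \avgn level. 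Hence a node at depth $h$ below an even-depth ancestor $s_{h'}$ was reached only if $\Delta_{\rightarrow s}(s_{h'}) \lesssim \epsilon_{s}\gamma^{(h-h')/2}$. Here we use that the algorithm stops at an \avgn node once $\epsilon\ge 1/(2(1-\gamma))$, so $\epsilon_s\lesssim 1/(1-\gamma)$. Tracking the constants $4/((1-\eta)(1-\gamma))$ and the factor $\eta$ should reproduce exactly the threshold $\nocond{h-h'}$ of Definition~\ref{def:near-opt}. This step also bounds the depth: since the bias grows by $1/\gamma$ per level and the recursion stops at $\epsilon \ge 1/(2(1-\gamma))$, the maximal depth is $2H$ with $H \approx \ln(1/\epsilon)/\ln(1/\gamma)$.

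\textbf{Step 2: calls per node.} A node at depth $2h$ receives bias parameter about $\epsilon\gamma^{-h}$, up to the $\eta$ factors. The choice $\eta=\gamma^{1/\max(2,\ln(1/\epsilon))}$ makes $\eta^{-2H} = (1/\epsilon)^{o(1)}$, which is the source of the $o(1)$ in the exponent. A \maxn node stops refining a child once $\conf \le \epsilon_s$. Its total number of child calls is therefore at most $K\ln(t/\delta)/((1-\eta)^2(1-\gamma)^2\epsilon_s^2)$, or the inherited $\nalg$ if that is larger. The number of distinct samples held by an \avgn node at depth $2h$ is the maximum $\nalg$ it is ever asked for. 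So I would bound the total as
\[
\sum_{h\le H} |\noset_{2h}| \cdot \min\Big(\nalg_0,\ \tfrac{\ln(1/\delta)+\ln(1/\epsilon)}{(1-\eta)^2\epsilon^2\gamma^{-2h}}\Big)\cdot \mathrm{polylog}.
\]
The first argument of the $\min$ reflects the non-wasting ActiveNodes$(1:\nalg)$ mechanism, which never lets a child draw more than its parent requested. In the finite case, however, the relevant per-node count is really the number of distinct transitions, capped by $N$. Instead, I would note that the total work flowing through depth $2h$ is at most $\min(\nalg_0, C(N\kappa)^h\cdot(\text{per-node need}))$.

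\textbf{Step 3: summation.} Plug in $|\noset_{2h}|\le C(N\kappa)^h$ and per-node need $\approx \gamma^{2h}/\epsilon^2$. If $N\kappa\gamma^2<1$, the sum is dominated by $h=0$ and gives $\epsilon^{-2}$ with the logs, which corresponds to $\alpha=3$. Otherwise it is dominated by $h=H$, giving $(N\kappa)^H=(1/\epsilon)^{\log(N\kappa)/\ln(1/\gamma)}$. The extra log powers, giving $\alpha=5$, come from $(1-\eta)^{-2}\approx\ln^2(1/\epsilon)$ and the union-bound terms $\ln(t/\delta)$. Taking the maximum gives the stated exponent.

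The main obstacle is Step 1. A node may be called many times with different $(k,\epsilon)$ by a parent whose own requests change as its ancestors refine. So the argument that each call is justified by a gap bound must be made uniform along the whole path and across time, with the constant $16/(\gamma(1-\gamma))$ surviving the compounding of the $\eta$ and $\conf$ factors. I would first try an induction down the tree with the invariant ``the bias parameter received at depth $h$ is at most $\gamma^{-h/2}\eta^{-h}\epsilon$ and the node lies in $\noset_h$''.
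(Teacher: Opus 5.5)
\textbf{Gap in Step 2: inherited $m$.} Your Step~3 lands on the right order, $\sum_{h\le H}(N\kappa\gamma^2)^h\epsilon^{-2}$ up to logs. However, Step~2 does not justify it, and this is exactly the step that decides $\max(2,\cdot)$ versus $2+\cdot$. The per-node bound you plug in ($\approx\gamma^{2h}/\epsilon^2$ samples at depth $2h$) is false, because the $m$ parameter is \emph{inherited}:
\begin{itemize}
\item a \maxn node ending with $\abs{\cL}=1$ calls $l^\star$ with its own $m$;
\item an \avgn node passes each child its number of occurrences among the first $m$ samples.
\end{itemize}
With $K=N=1$, for instance, every node down to depth $\approx 2H$ is called with $m_0=\ln(1/\delta)/((1-\gamma)^2\epsilon^2)$.

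Your patch does not repair this, for two reasons. First, per node the count is the \emph{maximum} of the inherited $m$ and the loop count, not a minimum. Second, the total through depth $2h$ is not capped by $m_0$. The while loop of every near-optimal \maxn node at depth $2h'<2h$ injects fresh parameters of order $(\gamma/\eta)^{2h'}\epsilon^{-2}$ (up to logs), and these are then forwarded further down. Indeed, with your $\min(m_0,\cdot)$ the sum over $h\le H$ would be at most $Hm_0=\tcO(\epsilon^{-2})$ for every $N\kappa$. That contradicts your own Step~3, whose $h=H$ term is $(N\kappa)^H\gg m_0$. It is also false outright: if all actions have equal values and $NK\gamma^2>1$, depth $2h$ carries work of order $(NK\gamma^2)^h\epsilon^{-2}$. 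Separately, $N$ does not cap an \avgn node's sample count; that count is the largest $m$ it receives, and $N$ only enters through $\abs{\noset_{2h}}$.

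\textbf{Missing idea: a flow (conservation) bound.} Let $R_s$ be the largest $m$ with which $s$ is ever called.
\begin{itemize}
\item At an \avgn node $a$, the occurrence counts among its first $R_a$ samples sum to $R_a$, so $\sum_c R_c\le R_a$.
\item At a \maxn node $p$, the inherited $m$ goes only to $l^\star$, which on the good event is the optimal child, as in the consistency proof. Hence $\sum_c R_c\le R_p+\sum_c k^{\max}_c$, where $k^{\max}_c\le 18\ln(t/\delta)/((1-\eta)^2(1-\gamma)^2\epsilon_p^2)$ and $\epsilon_p\ge(\epsilon/2)(\eta/\gamma)^{h'}$ at depth $2h'$.
\end{itemize}
Let $W_{2h}$ be the sum of $R_s$ over \maxn nodes at depth $2h$. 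This yields the \emph{additive} recursion
\[
W_{2h}\le m_0+\sum_{h'<h}CK\,(N\kappa)^{h'}\Big(\frac{\gamma}{\eta}\Big)^{2h'}\frac{72\ln(t/\delta)}{(1-\eta)^2(1-\gamma)^2\epsilon^2},
\]
in which $(N\kappa)^{h'}$ multiplies only the injections, never the inherited mass. Summing over $h\le H$, with $(\gamma/\eta)^H\approx(1-\gamma)\epsilon$, gives $Hm_0+H^2\max\big(1,(N\kappa\gamma^2/\eta^2)^H\big)\epsilon^{-2}$ up to logs. That is exactly the exponent $\max(2,\ln(N\kappa)/\ln(\eta/\gamma))$.

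\textbf{Comparison with the paper.} The paper does not refine per depth. It multiplies a uniform root-scale cap $18\ln(t/\delta)/((1-\eta)^2(1-\gamma)^2\epsilon^2)$ per near-optimal node by $C(N\kappa)^h$; this cap is valid precisely because inherited and loop parameters are both of that scale. Read literally, its chain ends at $(1/\epsilon)^{2+\ln(N\kappa)/\ln(\eta/\gamma)}$. So your depth-dependent accounting is what the $\max$ actually needs, but it must be done at the level of flows, not per node.

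\textbf{Step 1.} Your sketch is at the paper's level of rigor; the paper only asserts that only near-optimal nodes are explored on the good event. The threshold you get is $(\gamma/\eta)^{(h-h')/2}$ rather than exactly $\gamma^{(h-h')/2}$. This is harmless, since $\eta^{-H}=O(1)$ for the chosen $\eta$.
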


This provides a problem-dependent sample-complexity bound, which already in the worst case ($\kappa=K$) improves over the best-known worst-case bound $\tcO\left((1/\epsilon)^{2+\ln(KN)/\ln(1/\gamma)}\right)$~\cite{szorenyi2014optimistic}. 
This bound gets better as $\kappa$ gets smaller and is minimal when $\kappa=1$. This is, for example, the case when the gap (see definition given in Equation~\ref{eq:gap}) at \maxn nodes is uniformly lower-bounded by some $\Delta>0$. In this case, this theorem provides a bound of order $(1/\eps)^{\max\pa{2,\ln(N)/\ln(1/\gamma)}}$. However, we will show in Remark~\ref{remark:lowerdelta} that we can further improve this bound to $(1/\eps)^2$.


\subsection{Sample complexity in the infinite case}


%
%

Since the previous bound depends on $N$, it does not apply to the infinite case with $N = \infty$. We now provide a sample complexity result in the case $N=\infty$. However, notice that when $N$ is bounded, then \emph{both} results apply. 

We first define gap $\Delta(s)$ for any \maxn node $s$  as the difference between the best and second best arm,
\begin{equation}\label{eq:gap}
\Delta(s) = \Value{i^{\star}} - \max_{i\in\Child{s}, i\neq i^{\star}}\Value{i}\quad\text{ with }
i^{\star} = \argmax_{i\in\Child{s}} \Value{i}.
\end{equation}

For any even integer $h$, we define a random variable $S^h$ taking values among \maxn nodes of depth $h$, in the following way. First, from every \avgn nodes from the root to nodes of depth $h$, we draw a single transition to one of its children according to the corresponding transition probabilities. This defines a subtree with $K^{h/2}$ nodes of depth $h$ and we choose $S^h$ to be one of them {\em uniformly at random}. Furthermore, for any even integer $h' < h$ we note $S^h_{h'}$ the \maxn node ancestor of $S^h$ of depth $h'$. 

\begin{definition}
We define $d\ge0$ as the smallest $d$ such that for all $\xi$ there exists $a>0$ for which for all even $h>0$,
\[\EE{K^{h/2}\mathbbm{1}
\pa{S^h\in\mathcal{N}_h}
\prod_{\substack{0\le h' < h \\ h' \equiv 0(\!\!\!\!\!\!\mod 2)}}\pa{\frac{\xi}{\gamma^{h-h'}}}^{\mathbbm{1}\pa{\Delta(S^h_{h'}) < \nocond{h-h'}}} } \le a\gamma^{-dh}\]
If no such $d$ exists, we set $d=\infty$.
\end{definition}

This definition of $d$ takes into account the size of the near-optimality set (just like $\kappa$) but unlike $\kappa$ it also takes into account the difficulty to identify the near-optimal paths.  

Intuitively, the expected number of oracle calls performed by a given \avgn node $s$ is proportional to: ($1/\eps^2$) $\times$ (the product of the inverted squared gaps of the set of \maxn nodes in the path from the root to $s$) $\times$ (the probability of reaching $s$ by following a policy which always tries to reach $s$). 

Therefore, a near-optimal path with a larger number of small \maxn node gaps can be considered \emph{difficult}. By assigning a larger weight to difficult nodes, we are able to give a better characterization of the actual complexity of the problem and provide polynomial guarantees on the sample complexity for $N=\infty$ when $d$ is finite. 


%

\begin{theorem}\label{thm:ninfinite} 
If $d$ is finite then there exists $C>0$ such that for all $\epsilon>0$ and $\delta>0$, the expected sample complexity of \metagrill satisfies
\[{\EE{\ncall(\epsilon,\delta)} \le C\frac{\pa{\ln(1/\delta)+\ln(1/\epsilon)}^3}{\epsilon^{2+d}}}\cdot\]
\end{theorem}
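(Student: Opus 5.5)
We bound the expected number of oracle calls by a top-down accounting of the calls each node receives, on the high-probability event $\mathcal{E}$ under which Lemma~\ref{lemma:induc} holds at every node and every confidence bound $\mu_i\pm 2U_i$ of every \maxn node contains the true child value. On the complement of $\mathcal{E}$ (probability at most $\delta$) we only need a crude polynomial bound on the work, which is available because every call with bias parameter $\epsilon\ge 1/(2(1-\gamma))$ terminates immediately. This caps the depth at $H=\cO(\ln(1/\epsilon)/\ln(1/\gamma))$, and the $\ln(t/\delta)$ terms keep the total finite. Choosing $\delta$-dependent constants then makes this part negligible.

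\textbf{Step 1: cost of one \avgn node.} An \avgn node $s$ receiving a call with parameters $(m,\epsilon)$ samples at most $m$ transitions. Its cost is therefore dominated by the largest $m$ it ever receives, call it $m_s$, plus the calls it forwards to its children. Each child $s'$ is called with $k=$ (number of occurrences of $s'$ among the first $m$ samples). In expectation this is $m\,p(s'|s)$, so in the infinite case only a few children are ever sampled. Unrolling this recursion, we want to show
\[
\EE{\ncall}\lesssim \sum_{h\text{ even}\le H}\;\sum_{s\text{ \maxn at depth }h}\EE{m_s},
\]
where $m_s$ is the largest request reaching $s$.

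\textbf{Step 2: requests at \maxn nodes.} Consider a \maxn node $s$ at depth $h$ receiving $(m,\epsilon')$, where $\epsilon'\approx \epsilon\gamma^{-h}$ because bias requests are divided by $\gamma$ at each \avgn level. Child $i$ is sampled up to $k_i$ with $U_i\approx \frac{1}{(1-\eta)(1-\gamma)}\sqrt{\ln(t/\delta)/k_i}$. Under $\mathcal{E}$, child $i$ stays in $\cL$ only while $U_i\gtrsim \Delta_i/4$, where $\Delta_i$ is its value gap. Hence $k_i\lesssim \ln(1/\delta)/\big((1-\gamma)^2\max(\Delta_i,\epsilon')^2\big)$, plus a term $m$ for the single surviving child. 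Consequently:
\begin{itemize}
\item A child with $\Delta_i>\nocond{\cdot}$, i.e.\ a child that is not near-optimal, receives only the small request and is never explored beyond that bias level. This is where Definition~\ref{def:near-opt} enters.
\item When the gap $\Delta(s)$ is small relative to the threshold $\nocond{H-h}$, all $K$ children are sampled with the full budget $\approx \gamma^{-(H-h)}/\epsilon^2\cdot$polylog, rather than just $m$. Since $\gamma^{H-h}\approx\epsilon/\epsilon'$, this inflates the request by the factor $\xi/\gamma^{h-h'}$ appearing in the definition of $d$.
\end{itemize}

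\textbf{Step 3: matching to $d$ and summing.} Multiplying these factors along a path, with $p(s|s_0)$ accounting for the \avgn transitions, the expected request reaching a depth-$h$ node on the sampled subtree equals $\frac{\ln(1/\delta)}{(1-\gamma)^2\epsilon^2}$ times the integrand in the definition of $d$. The factor $K^{h/2}$ compensates for the uniform choice of $S^h$. So the expected work at depth $h$ is bounded by $a\gamma^{-dh}\epsilon^{-2}\,\mathrm{polylog}$. Summing over $h\le H$ with $\gamma^{-H}\approx 1/\epsilon$ gives $\epsilon^{-2-d}$. The $\ln(t/\delta)$ factors, where $t$ is bounded polynomially in $1/\epsilon$ and $\ln(1/\delta)$, combine with the factor $\ln(1/\epsilon)$ from $\eta$, through $1/(1-\eta)^2$, to give the cube of $\ln(1/\delta)+\ln(1/\epsilon)$.

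\textbf{Main obstacle.} The hardest part is Step 2: rigorously converting the adaptive elimination dynamics into the multiplicative per-level factor $\xi\gamma^{-(h-h')}$. Requests compound across many \maxn ancestors, and the reuse of stored samples, via $\mathrm{SampledNodes}(1{:}m)$, must be shown not to double-count. I would handle this by induction on depth, bounding the maximal $(m,\epsilon)$ pair each node ever receives. I would then use the independence of transitions to take expectations level by level, and use the freedom in $\xi$ to absorb the constants from $\eta$ and the factor $2U$.
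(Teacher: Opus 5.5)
\textbf{Verdict: genuine gaps.} Your skeleton is the paper's. On the good event of Lemma~\ref{lemma:conf} you bound top-down the maximal request $R_s$ each node receives. You multiply by $p(s'|s)$ at \avgn nodes and pass the request through at \maxn nodes where only the path towards $s$ survives. You pay a factor at small-gap \maxn nodes, then use $\PP{S^h=s}=p(s|s_0)K^{-h/2}$ and sum over $h\le h_{\max}$.

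\textbf{The step you flag as the main obstacle is left open, and your plan for it would not work.} At a \maxn node where several children survive the loop, no call with the incoming $m$ is made. Every child is called with $k_i<1+16\ln(t/\delta)/((1-\eta)^2(1-\gamma)^2(\epsilon')^2)=:F$, whatever $m$ was. That replaces the request rather than inflating it, so ``multiplying these factors along a path'' needs a justification. The paper's uses that $R$ is integer valued: $\EE{R_{s_{h'+1}}}\le F\,\PP{R_{s_{h'}}>0}\le F\,\EE{R_{s_{h'}}}$. This gives $\EE{R_s}\le m\,p(s|s_0)\,F^{q(s)}$, with $q(s)$ the number of ancestors at which several children survive, and this is the quantity summed over $\mathcal{N}_h$.

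Your bookkeeping is also off:
\begin{itemize}
\item Only $h/2$ \avgn levels lie above depth $h$, so the finest bias there is about $\epsilon(\eta/\gamma)^{h/2}$, not $\epsilon\gamma^{-h}$.
\item The budget $F\approx(\epsilon')^{-2}$ shrinks with depth, so it is not ``$\gamma^{-(H-h)}/\epsilon^2$''.
\item Each factor carries $\ln(t/\delta)/(1-\eta)^2$, of order $\ln(t/\delta)\ln^2(1/\epsilon)$. This cannot be absorbed by ``the freedom in $\xi$'', because the constant $a$ in the definition of $d$ depends on $\xi$ in an uncontrolled way. Your derivation of the cube of logarithms is therefore unsupported.
\end{itemize}

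\textbf{Your treatment of the complement of the good event fails.} The constant $C$ may not depend on $\epsilon$ or $\delta$. So a term $\delta\,n_{\max}$ is harmless only if $n_{\max}$ is itself $O(\epsilon^{-2-d}\,\mathrm{polylog})$; ``$\delta$-dependent constants'' are not available.

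Moreover, $n_{\max}$ is not even polynomial when $N=\infty$. If every \maxn node runs its loop to the end, a \maxn node at depth $2j$ with bias $\epsilon_j$ creates about $K/\epsilon_j^2$ distinct grandchildren, up to logarithms. Over the $\Theta(\ln(1/\epsilon))$ levels allowed by Lemma~\ref{lemma:finitetree}, this is sparse-sampling growth, $(1/\epsilon)^{\Theta(\ln(1/\epsilon))}$.

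The paper does not attempt this step. Its bound is derived on the event of Lemma~\ref{lemma:conf} and is stated as holding with probability $1-\delta$. You should either restrict your claim in the same way, or show that the extra work caused by a confidence failure is small relative to that failure's probability.
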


Note that this result holds in expectation only, contrary to Theorem~\ref{thm:ncalls} which holds in high probability. 

We now give an example for which $d=0$, followed by a special case of it.
\begin{lemma}\label{lemma:b}
If there exists $c>0$ and $b>2$ such that for any near-optimal \avgn node $s$, 
\[\PP{\Delta\pa{\trans{s}} \le x} \le c x^b,\]
where the random variable $\trans{s}$ is a successor state from $s$ drawn from the MDP's transition probabilities, then $d=0$ and consequently the sample complexity is of order $1/\eps^2$. 
\end{lemma}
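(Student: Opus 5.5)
We prove $d=0$ directly from the definition: for every $\xi$ we exhibit $a$ with the expectation bounded by $a$ uniformly in even $h$. The plan is to drop the indicator $\mathbbm{1}(S^h\in\noset_h)$ only partially. We keep the information it carries, namely that every \maxn ancestor $S^h_{h'}$ lies on a near-optimal path, so that each \avgn parent of $S^h_{h'}$ is near-optimal and the hypothesis $\PP{\Delta(\trans{s})\le x}\le c x^b$ applies at each level. The resulting product is then controlled level by level.

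\textbf{Step 1: write $S^h$ as a sequential random walk.} Drawing one transition at every \avgn node and then picking one of the $K^{h/2}$ depth-$h$ \maxn nodes uniformly is equivalent in law to the following walk. Starting at the root, at each \maxn node pick a child uniformly among $K$; at each \avgn node draw the successor $\trans{s}$. Under this walk, the factor $K^{h/2}$ exactly cancels the uniform action choices. The quantity therefore becomes an expectation over a walk that, at each \maxn node, sums over all $K$ actions, i.e. a sum over action sequences of expectations over transitions.

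\textbf{Step 2: use the gaps to kill suboptimal actions.} Near-optimality of $S^h$ forces $\Delta_{\rightarrow S^h}(S^h_{h'})\le 16\gamma^{(h-h')/2}/(\gamma(1-\gamma))$ at every ancestor. Whenever $\Delta(S^h_{h'})$ exceeds this threshold, only the optimal action is allowed at that level, since any other action has $\Delta_{\rightarrow}\ge\Delta$. Hence at each level, either the gap is large and the action sum contributes a factor $1$ with weight $1$, or the gap is below $\theta_j:=16\gamma^{j/2}/(\gamma(1-\gamma))$, where $j=h-h'$. In the latter case at most $K$ actions contribute, with weight $\xi\gamma^{-j}$. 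Conditioning on the \avgn parent (near-optimal), the probability of the small-gap event is at most $c\,\theta_j^b = c' \gamma^{jb/2}$. The expected per-level factor is thus at most
\[
1 + K\xi\gamma^{-j}\, c'\gamma^{jb/2} = 1 + c''\gamma^{j(b/2-1)}.
\]

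\textbf{Step 3: multiply and sum.} Proceeding by backward induction (tower property from the deepest level up, conditioning on the history), the expectation is bounded by $\prod_{j\ge 2,\ j \text{ even}}\bigl(1+c''\gamma^{j(b/2-1)}\bigr)$. This product converges because $b>2$ makes $\sum_j\gamma^{j(b/2-1)}<\infty$. It yields a constant $a$ independent of $h$, so the defining inequality holds with $d=0$, and Theorem~\ref{thm:ninfinite} gives the $1/\epsilon^2$ rate.

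The main obstacle is the induction in Step 3. The depth-dependent factors are indexed by $j=h-h'$, the distance to the leaf rather than to the root, and the action branching at one level interacts with which \avgn nodes below are near-optimal. I would organize the induction from the leaf upward, proving for each \maxn node at distance $j$ from depth $h$ that the conditional weighted sum is at most $\prod_{i\le j}(1+c''\gamma^{i(b/2-1)})$. The key points to verify are that the hypothesis is only ever applied at near-optimal \avgn nodes, and that restricting to the optimal action when the gap is large is justified by Definition~\ref{def:near-opt}.
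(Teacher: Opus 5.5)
The paper states this lemma without proof, so there is nothing to compare against. Judged on its own, your core computation is the natural one and is correct: the per-level factor is $1+K\xi\gamma^{-j}c\,\theta_j^b$, which is summable because $b>2$. There is, however, one genuine gap, and the statement itself cannot survive it without an extra assumption.

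\textbf{The gap is the root level} $h'=0$, i.e.\ $j=h$. In Step 2 you bound the probability of a small gap at $S^h_{h'}$ by conditioning on its \avgn parent. The root has no parent, and the hypothesis says nothing about the deterministic quantity $\Delta(s_0)$. Your product in Step 3 nevertheless includes the factor $j=h$.

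Suppose $\Delta(s_0)=0$, a tie at the root. Then the root contributes the factor $\xi\gamma^{-h}$ deterministically. The depth-$h$ nodes reached by an optimal policy have $\Delta_{\rightarrow}=0$ at every ancestor, so they are near-optimal and carry total transition probability $1$. For $\xi\ge1$, every other factor on their paths is at least $1$. Hence the expectation in the definition of $d$ is at least $\xi\gamma^{-h}$, which forces $d\ge1$.

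So you must add an assumption such as $\Delta(s_0)>0$, or impose the hypothesis on the root as well. With $\Delta(s_0)>0$, the root factor is $1+K\xi\gamma^{-h}\mathbbm{1}(\Delta(s_0)\le\theta_h)$. This is bounded uniformly in $h$ because $\theta_h\to0$, and your argument then goes through.

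\textbf{The induction you left open} needs a few details.

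First, state it at \avgn nodes, in expectation over the successor, not pointwise at \maxn nodes. A \maxn node with a small gap carries its own factor $K\xi\gamma^{-j}$, which no bound of the form $1+c''\gamma^{j(b/2-1)}$ controls pointwise. Concretely, let $G(u)$ be the weighted sum over depth-$h$ descendants of a \maxn node $u$. Let $B_j$ be the supremum of $\mathbb{E}[G(\tau_a)]$ over the relevant \avgn nodes $a$ whose successors lie at depth $h-j$. Then $B_j\le(1+K\xi c\,\theta_j^b\gamma^{-j})B_{j-2}$ with $B_0=1$.

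Second, the supremum must range over \avgn nodes whose path satisfies the thresholds $\theta_{h-h''}$ taken relative to the target depth $h$, not over all near-optimal nodes. This class is closed under passing to allowed grandchildren and is contained in the near-optimal nodes. By contrast, near-optimality of $a$ alone does not give near-optimality of its allowed grandchildren, because their constraints at the old ancestors are tighter by a factor $\gamma$.

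Third, in the boundary case $\Delta(S^h_{h'})=\theta_j$ the weight is $1$, but several actions may be allowed. Handle this by reducing to $\xi\ge1$, which is legitimate because the quantity is nondecreasing in $\xi$. Then use the non-strict event $\{\Delta\le\theta_j\}$, which the hypothesis controls.
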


\begin{remark}\label{remark:lowerdelta}
If there exists $\Delta_{\min}$ such that for any near-optimal \maxn node $s$, $\Delta(s) \ge \Delta_{\min}$ then $d=0$ and the sample complexity is of order $1/\eps^2$. 
Indeed, in this case as $\PP{\Delta_s \le x} \le \pa{x/\Delta_{\min}}^b$ for any $b>2$ for which $d=0$ by Lemma~\ref{lemma:b}.
\end{remark}

\section{Discussion}

\subsection{On the choice of the near-optimality definition}
\label{ss:choiceno}
In this part, we discuss the possible alternative choices to the near-optimality set $\noset_h$ and explain the choice we made 
in Definition~\ref{def:near-opt}. From this definition, the knowledge of the rewards of the near-optimal nodes is enough to compute an optimal policy. 
Therefore, whatever the rewards associated with the non-near-optimal nodes are, they do not change the optimal policy. 
A good \emph{adaptive} algorithm, would not exploring the whole tree, but would with high probability only explore a set not significantly larger than a set $\noset_h$ of near-optimal nodes. 
There are other possible definitions of the near-optimality set satisfying this desired property that we could possible consider for the definition of $\noset_h$. 
Ideally, this set would be as small as possible.
\paragraph{Alternative definition 1}
An idea could be to consider for any node $s$ and an ancestor $s'$, the value of the policy starting from $s'$ choosing the action leading to $s$ every time it can
and playing optimally otherwise.
Let $V_s(s')$ be this value. 
A natural approach would be to define $\noset_h$ only based on $V_s(s_0)$ with $s_0$ being the root. 
To ensure that exploring the near-optimal nodes is enough to compute the optimal policy we need any near-optimal nodes $s$ of depth $h$ to verify
\[ \pa{V(s_0) - V_s(s_0)}p(s | s_0) \le \cfrac{\gamma^h}{1-\gamma}\cdot\]
When there is no \avgn node, $p(s) = 1$ and this definition coincides with the near-optimal set defined in \olop~\citep{bubeck2010open}. Nevertheless, in our case, $p(s)$ can be low and even 0 in the infinite case. When $p(s| s_0) = 0$ for all $s$, any node would be near-optimal which is bad because 
then the near-optimality set is large and the sample complexity of the resulting algorithm suffers.
\paragraph{Alternative definition 2}
There is a smarter way to define a near-optimality set. We could define it as all the nodes $s$ of depth $h$ such that
\[\forall h'\le h, \quad \sum_{i=h'}^h p(s_{i}|s_{h'})\gamma^i\left(V\left(s_{i}\right) - V(c(s_{i},s)\right) \le \frac{\gamma^{h}}{1-\gamma}\cdot\]
Taking $h' = 0$, we recover the alternative definition 1. Alternative definition 2 defines a smaller near-optimality set, as the condition needs to be verified for all $h'$ and not just for $h'=0$. This definition would also lead to a smaller near-optimality set than our Definition~\ref{def:near-opt}. Indeed, in Definition~\ref{def:near-opt} we consider only the first term of the sum above. This enables the algorithm to compute its strategy from a node $s$ only based on the value of its descendants. This
is an ingredient leading to better computational efficiency of \metagrill. Indeed, the computational complexity of our algorithm is linear in the number of calls to the generative model. 
In addition, when the transition probabilities are low, both definitions are close and coincide in the infinite case, when the probability transitions
are continuous and therefore $p(s|s')=0$.  

\subsection{Comparison of the finite and the infinite analysis}

The difference between our finite and the infinite analysis lies in how do we bound the probability of reaching at least once the node $s$ from $s'$ given we received $m$ samples from $s'$. This probability is equal to $1-(1-p(s|s'))^m$. In Figure~\ref{fig:graph}, we show the probability function of $m$ and the bounds we use for this function. For the infinite analysis, we use the linear approximation at $m=0$ as an upper bound. This bound is tight when $\ln(1-p(s|s')) \ll m$ which happens in the infinite case. For the finite case we bound this quantity by 1. 

\begin{figure}[H]
\begin{center}
\vspace{1em}
\includegraphics[width=0.6\columnwidth]{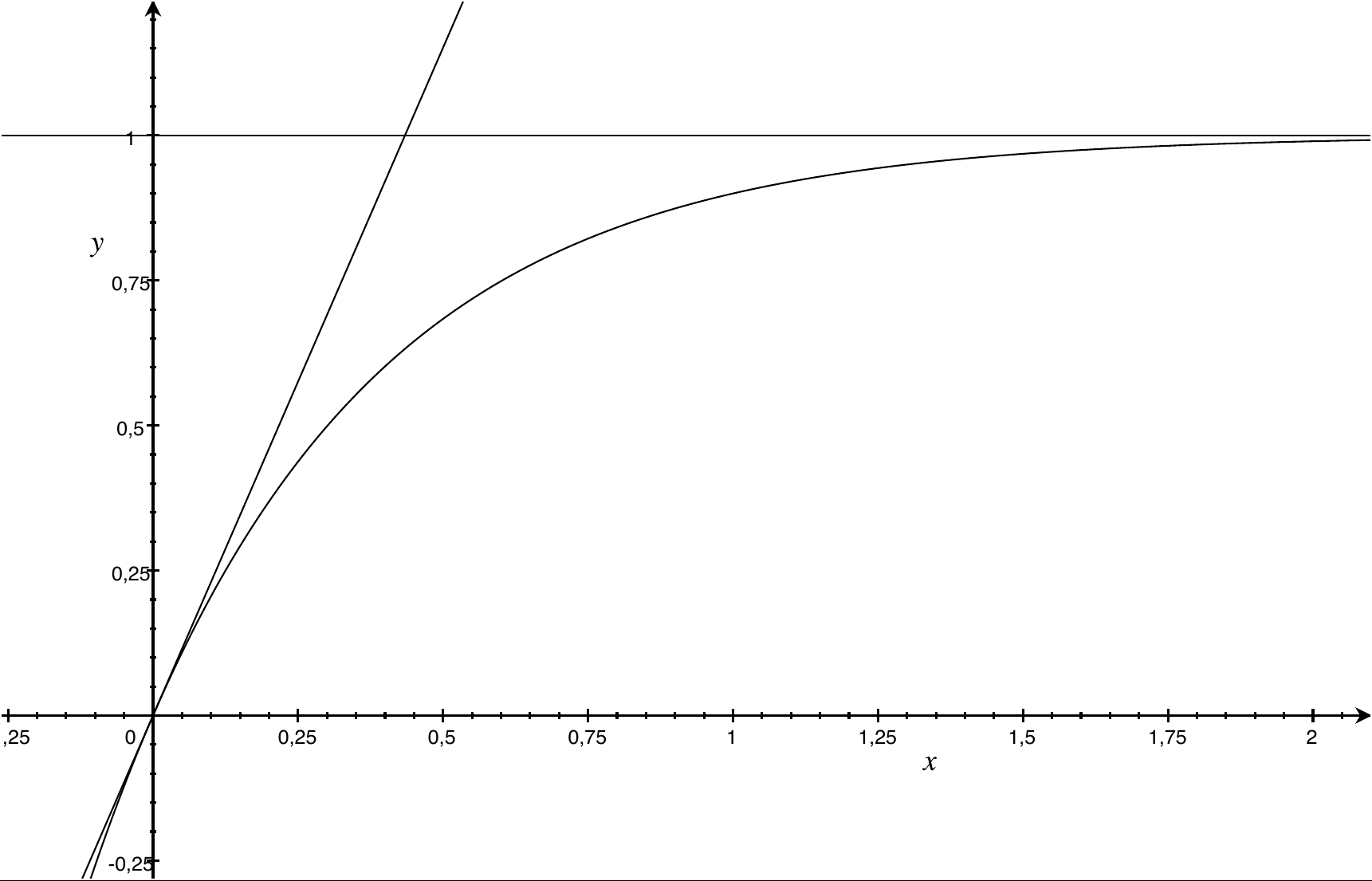}
\caption{Infinite vs.\,finite analysis}
\label{fig:graph}
\end{center}
\end{figure}

\clearpage

{\small
\paragraph{\small Acknowledgements}
\label{sec:Acknowledgements}
The research presented in this paper was supported by French Ministry of
Higher Education and Research, Nord-Pas-de-Calais Regional Council, a doctoral grant
of \'{E}cole Normale Sup\'erieure in Paris, Inria and Carnegie Mellon University associated-team project EduBand, and French National Research Agency projects ExTra-Learn (n.ANR-14-CE24-0010-01) and BoB (n.ANR-16-CE23-0003) }

\clearpage
\appendix
\section{Additional notation for the analysis}
In this part, we define new quantities to be used in the other part of the appendix.
For all \maxn nodes $i$ and time $\tlocal$, we denote $\mu^{s,t}_i$ the estimate of child~$i$ at time~$t$ and $(\epsilon^{s,t}_i, k^{s,t}_i)$ the parameters that have been use to get $\mu^{s,t}_i$. We then define 
\[\conf_i^{\slocal, \tlocal} = \cfrac{4}{1-\gamma}\sqrt{\cfrac{\ln(t/\delta)}{\klocal_i^{\slocal, \tlocal}}} \cdot\]

\section{Maximum depth of the search}
Next lemma ensures that \metagrill terminates and bounds the maximum depth it reaches.  
\begin{lemma}
\label{lemma:finitetree}
For all $\nalg$ and $\epsilon>0$, a call to the root with parameters $\pa{\nalg,\epsilon}$ never results in calls to a node of depth $h>h_{\text{max}}$ with 
\[h_{\text{max}} = 2\ceil{\frac{\ln\pa{1/\epsilon}+\ln\pa{1/(1-\gamma)}}{\ln\pa{\eta/\gamma}}}+2.\]
\end{lemma}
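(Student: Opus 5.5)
The plan is to track how the bias parameter evolves along any chain of recursive calls from the root, and to show that it grows geometrically until it reaches the threshold $1/(2(1-\gamma))$. At that threshold an \avgn node answers without calling anything. Two facts about the algorithms drive the argument.

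First, an \avgn node called with bias $\epsilon'$ calls each of its children with bias $\epsilon'/\gamma$.

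Second, a \maxn node called with bias $\epsilon'$ calls its children in only two ways. In the final step it calls child $l^\star$ with the same bias $\epsilon'$. Inside the while loop it calls child $l$ with bias $\eta\max(U_l,\epsilon')\ge \eta\epsilon'$. In every case, then, a \maxn node passes its children a bias of at least $\eta\epsilon'$.

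Combining the two over one \maxn--\avgn pair, a call reaching a node of depth $h$ with bias $\epsilon_h$ leads to a call two levels deeper with bias at least $\epsilon_h\,\eta/\gamma$. Note that $\eta$ here is the per-node quantity $\gamma^{1/\max(2,\ln(1/\epsilon'))}$. It may differ between nodes, so I will use a uniform lower bound on it.

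To get that uniform bound, I will check monotonicity. The function $\epsilon'\mapsto\gamma^{1/\max(2,\ln(1/\epsilon'))}$ is nondecreasing in $\epsilon'$, because larger $\epsilon'$ gives a smaller $\ln(1/\epsilon')$ and hence a larger exponent on $\gamma<1$, so a smaller power of $\gamma$ is lost. By induction the biases never fall below the root value $\epsilon$: \avgn nodes multiply the bias by $1/\gamma>1$, and over a pair the factor is at least $\eta/\gamma>1$, since $\eta\ge\gamma^{1/2}>\gamma$. The one subtlety is that a \maxn node at the top of a pair may shrink the bias by the factor $\eta$ before the next \avgn node multiplies it by $1/\gamma$. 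Within a pair the bias therefore stays at least $\eta$ times the bias at the start of that pair, which is still at least $\eta\epsilon\cdot(\eta/\gamma)^{\text{pairs so far}}$. This gives the uniform bound $\eta\ge\eta(\epsilon)$, the value $\eta$ takes at the root's $\epsilon$, at every node. The statement's $h_{\max}$ uses exactly this $\eta=\eta(\epsilon)$.

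Iterating, a call at depth $2j$ on any path carries a bias of at least $\epsilon(\eta/\gamma)^j$, with the possible loss of one factor $\eta$ at the \maxn step, which the additive slack absorbs. Once $\epsilon(\eta/\gamma)^j\ge 1/(2(1-\gamma))$, which holds as soon as $j\ge \ln(1/(2\epsilon(1-\gamma)))/\ln(\eta/\gamma)$, the next \avgn node returns $1/(2(1-\gamma))$ without calling any child. The formula for $h_{\max}$ contains $\ln(1/\epsilon)+\ln(1/(1-\gamma))$, which dominates $\ln(1/(2\epsilon(1-\gamma)))$, and then takes a ceiling. The remaining $+2$ covers the depth offset from the root (a \maxn node), the lost factor of $\eta$, and the terminating \avgn node's own depth.

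The main obstacle I expect is this bookkeeping of the \maxn node's $\eta$ contraction, together with making sure $\eta$ is evaluated at a bias no smaller than $\epsilon$. The monotonicity argument handles this: it shows the biases never drop below $\epsilon$, so every node's $\eta$ is at least $\eta(\epsilon)$. What remains is to count levels carefully so that the constant $+2$ suffices.
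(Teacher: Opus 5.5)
\textbf{The per-pair argument is right; the monotonicity step is reversed.} Your core mechanism matches the paper's. An \avgn node divides the bias by $\gamma$. A \maxn node multiplies it by at least $\eta$, both in the loop calls with $\eta\max(U_l,\epsilon')$ and in the final call with $\epsilon'$. So each \maxn--\avgn pair gains a factor $\eta/\gamma$, and you iterate up to the threshold $1/(2(1-\gamma))$.

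The gap is in the paragraph that reconciles the per-node $\eta$ of Algorithm~\ref{alg:max} with the single $\eta$ in $h_{\max}$. The map $\epsilon'\mapsto\gamma^{1/\max(2,\ln(1/\epsilon'))}$ is non\emph{increasing}, not nondecreasing. A larger $\epsilon'$ gives a larger exponent on $\gamma<1$, hence a smaller value: for example $\gamma^{1/10}$ at $\epsilon'=e^{-10}$, but $\gamma^{1/2}$ for $\epsilon'\ge e^{-2}$. Combined with your correct observation that \maxn-node biases never drop below $\epsilon$, this gives $\eta_{\text{node}}\le\eta(\epsilon)$, the reverse of what you use. The only uniform lower bound the formula provides is $\eta_{\text{node}}\ge\gamma^{1/2}$. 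That yields a depth bound with $\tfrac12\ln(1/\gamma)$ in place of $\ln(\eta/\gamma)$.

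\textbf{The stated bound in fact fails under the per-node reading.} A \maxn node passes exactly $\eta\epsilon'$ to some child whenever its loop ends with $\abs{\cL}\ge 2$, as happens with tied actions. Along such a path, each pair raises the log-bias by only $\ln(1/\gamma)\pa{1-1/\max(2,\ln(1/b))}$ at current bias $b$. Summed along the path, the deficit relative to the constant rate $\ln(\eta(\epsilon)/\gamma)$ is of order $\ln\ln(1/\epsilon)$. That means roughly $\ln\ln(1/\epsilon)/\ln(1/\gamma)$ extra pairs, which the constant $+2$ cannot absorb once $\epsilon$ is small. So with $\eta$ evaluated at the root's $\epsilon$, the stated $h_{\max}$ cannot be obtained this way, and it does not hold in general.

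\textbf{How the paper avoids this, and how to repair your proof.} The paper's proof never meets the issue, because it treats $\eta\in(\gamma,1)$ as one fixed constant shared by all \maxn nodes. It is tuned once, from the target accuracy, at the end of the sample-complexity proof. A one-line induction then gives bias at least $\epsilon(\eta/\gamma)^j$ at depth $2j$, and $h_{\max}$ follows. You have two options:
\begin{itemize}
\item Adopt that reading explicitly and drop the monotonicity paragraph. What remains is then essentially the paper's argument.
\item Keep the per-node $\eta$ and state the lemma with $\gamma^{1/2}$ in place of $\eta$.
\end{itemize}
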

\begin{proof}
A \maxn node called with parameters $(\nalg,\epsilon)$ only performs calls with parameter $\epsilon'\ge\eta\epsilon$. An \avgn node called with parameters $(\nalg,\epsilon)$ only perform calls with parameter $\epsilon'\ge\epsilon/\gamma$. 
Since \maxn and \avgn nodes alternate, we get from an immediate induction that if the root is called with $\epsilon$, then for any $h>0$ any node at depth $2h$ is called with parameter $\epsilon' \ge \epsilon\eta^h/\gamma^h$ and therefore every node at depth $h$ is called with parameter
\[\epsilon' \ge \cfrac{\epsilon}{(\eta/\gamma)^{\floor{h/2}}}\cdot\]
For $h > h_{\text{max}} - 2$ we have that
\[\epsilon' \ge \cfrac{\epsilon}{(\eta/\gamma)^{\floor{(h_{\text{max}}-2)/2}}} \ge \cfrac{1}{1-\gamma}\cdot\]
By the algorithm definition, if $h$ is an \avgn node it returns 0 and does not perform any call. On the other hand, if $h$ is a \maxn node, it calls the \avgn nodes of next depth and these nodes return 0 without performing any call. In any case, no node of depth $h$ is ever called.  
\end{proof}


\begin{lemma}
\label{lemma:conf}
For any set $\mathcal{J}$ of node, if all of them verify the two property of Lemma~\ref{lemma:induc} then
\[\PP{\bigcup_t\bigcup_{i\in\mathcal{J}} B_{i,s}(t)} \le \delta/t\]
with 
\[B_{i,s}(t) = \ac{\abs{\mu^{s,t}_i - v_i} \ge \epsilon^{s,t}_i + U^{s,t}_i},\]
\end{lemma}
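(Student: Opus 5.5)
The plan is to use the sub-Gaussian-with-bias property that Lemma~\ref{lemma:induc} gives for every node in $\mathcal{J}$. A Chernoff bound yields a deviation inequality for each individual estimate, and a union bound over children and times finishes the argument.

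\textbf{Single estimate.} Fix a child $i\in\mathcal{J}$ and a time $t$. Conditionally on the parameters $(\epsilon^{s,t}_i,k^{s,t}_i)$, the estimate $\mu^{s,t}_i$ satisfies, for every $\lambda$,
\[\EE{e^{\lambda(\mu^{s,t}_i-v_i)}}\le \exp\pa{\epsilon^{s,t}_i\abs{\lambda}+\frac{\sigma^2\lambda^2}{2}},\qquad \sigma^2=\frac{1}{k^{s,t}_i(1-\gamma)^2}.\]
Markov's inequality applied to $e^{\lambda(\mu-v_i)}$ with $\lambda>0$ gives
\[\PP{\mu^{s,t}_i-v_i\ge \epsilon^{s,t}_i+u}\le \exp\pa{-\lambda u+\sigma^2\lambda^2/2}.\]
The terms $\epsilon\lambda$ cancel. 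Optimizing with $\lambda=u/\sigma^2$ gives the bound $e^{-u^2/(2\sigma^2)}$, and the lower tail is handled the same way with $\lambda<0$. With $u=U^{s,t}_i=\frac{4}{1-\gamma}\sqrt{\ln(t/\delta)/k^{s,t}_i}$ we have $u^2/(2\sigma^2)=8\ln(t/\delta)$. Hence
\[\PP{B_{i,s}(t)}\le 2(\delta/t)^8.\]
One subtlety needs care here. The parameter $k^{s,t}_i$ is determined by the algorithm's past, so the bound should be applied conditionally on the $\sigma$-field generated before the call. The call with parameters $(k,\epsilon)$ satisfies the property of Lemma~\ref{lemma:induc} for those fixed parameters, so this conditional application is legitimate. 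Since $k^{s,t}_i$ only takes integer values up to $t$, one could alternatively union over its possible values.

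\textbf{Union bound.} I then sum over $i\in\mathcal{J}$, where $|\mathcal{J}|\le K$ for a \maxn node, and over all times $t'\ge t$ (or $t'\ge1$), using the margin given by the exponent $8$. The sum $\sum_{t'}K\cdot 2(\delta/t')^8$ is at most a constant times $K\delta^8$. For $\delta$ small, and with the factor $1/(1-\eta)$ present in the algorithm's $U$, which only enlarges $u$, this is at most $\delta/t$ in the intended reading. A cleaner route is to bound $\sum_{t'\ge t}2K(\delta/t')^8\le 2K\delta^8 t^{-7}\cdot\frac{1}{7}(1+o(1))\le \delta/t$ whenever $2K\delta^7\le 1$.

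\textbf{Main obstacle.} I expect the main difficulty to be justifying the conditional (stopping-time) application of the moment bound. The parameters passed to a child depend on previous outputs of all children, and a node reuses its stored samples across calls, so successive estimates of the same child are not independent. I would handle this by fixing $t$, noting that only the current call's estimate enters $B_{i,s}(t)$, and invoking the moment property conditionally on the history up to the start of that call. The reused samples are still fresh relative to that conditioning, since they were drawn independently of the decision values through the generative model. If that fails, the fallback is to union over all possible values $k\le t$ of the counter, which costs only an extra factor $t$. The exponent $8$ absorbs this factor easily.
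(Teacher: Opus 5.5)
Your route is the paper's: a Chernoff bound in which the bias term $\epsilon^{s,t}_i\abs{\lambda}$ cancels against the $\epsilon^{s,t}_i$ part of the threshold, the choice $u=U^{s,t}_i$ giving exponent $8\ln(t/\delta)$, and a union bound over children and times. The gap is in the hypothesis you feed into the Chernoff step. Property~B of Lemma~\ref{lemma:induc} is not an unconditional moment bound. For a child $i$ it holds only conditionally on $A_{i,s}(t)$, and it carries the prefactor $(1-\delta/t)^{-C_t(i)}$.

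The prefactor is harmless: with $C_t\le t$, $t\ge 2$ and $\delta\le 1$ it is at most $4$, which is how the paper gets $8(\delta/t)^8\le\delta/(2t^4)$ per child. The conditioning, however, cannot be dropped. What the hypotheses support is a bound on $\PP{\bigcup_t\bigcup_{i}B_{i,s}(t)\mid A_s(t)}$ with $A_s(t)=\bigcap_i A_{i,s}(t)$. That is what the paper's proof establishes, and it is exactly the form used inside the proof of Lemma~\ref{lemma:induc}, as $\PP{B_s(t)^c\mid A_s(t)}\le\delta/t$. An unconditional version would also have to pay $\PP{A_s(t)^c}$, for which Property~A only gives a bound of order $\delta C_t(s)/C_t(s_0)$, not $\delta/t$.

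Passing from the per-child event $A_{i,s}(t)$ to the joint event $A_s(t)$ also needs an ingredient your proposal lacks. You need the paper's observation that, for $j\neq i$, $A_{j,s}(t)$ is independent of $\mu^{s,t}_i$ and of $A_{i,s}(t)$, because different children's subtrees use disjoint, independent samples. This is what lets the extra conditioning leave child $i$'s conditional moment generating function bound intact.

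Two smaller points. First, your union bound over $\abs{\mathcal{J}}\le K$ forces the extra condition $2K\delta^7\le 1$, which is not in the statement. The paper instead bounds the number of children visited by time $t'$ by $t'$. Time $t'$ then costs at most $t'\cdot\delta/(2(t')^{4})$, and the surplus in $(\delta/t')^8$ is spent on $t'$ rather than on $K$.

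Second, your adaptivity worry is legitimate, and the paper does not address it: it applies Property~B at each $t$ as if $k^{s,t}_i$ were fixed. But your primary justification does not hold. The samples child $i$ reuses are exactly those that produced its earlier outputs. Those outputs are part of the history you condition on, and they determined $k^{s,t}_i$, so the samples are not fresh relative to that history. Your fallback is the one to rely on: a union bound over the possible values of the counter, with each fixed-parameter call controlled by Property~B, at a cost of a factor at most $t$ that the exponent $8$ absorbs.
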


\begin{proof}
If for some $\alpha>0,\beta, M$, a random variable $X$ verifies that
\begin{align*}
\EE{e^{\lambda X}} 
&\le M\exp{\pa{\abs{\lambda}\alpha + \cfrac{\beta^2\lambda^2}{2}}}, 
\end{align*}
then using Markov inequality for $u>\alpha$ and setting $\lambda = \frac{u-\alpha}{\beta^2}$, we get 
\begin{align*}
\PP{X \ge u} 
\le \cfrac{\EE{e^{\lambda X}}}{e^{\lambda u}}
\le M\exp{\pa{\abs{\lambda}\alpha-\lambda u + \cfrac{\beta^2\lambda^2}{2}}}
\le M\exp{\pa{-\cfrac{(u-\alpha)^2}{2\beta^2}}} \cdot
\end{align*}
Similarly, for $u<0$
\begin{align*}
\PP{X \le u} \le M\exp{\pa{-\cfrac{(u+\alpha)^2}{2\beta^2}}}\cdot
\end{align*}
Taking a union bound,  we have that for all $u>0$
\begin{align*}
\PP{\abs{X} \ge u} 
&\le 2M\exp{\pa{-\cfrac{(u-\alpha)^2}{2\beta^2}}}\cdot
\end{align*}
We define a set of auxiliary events
\[B_{i,s}(t) = \ac{\abs{\mu^{s,t}_i - v_i} \ge \epsilon^{s,t}_i + U^{s,t}_i},\]
and also define the event for which we prove the properties $A$ and $B$
\[A_s(t) = \medcap_iA_{i,s}(t)\]
Notice that for all children $i,j$, $A_{i,s}(t)$ is independent from $\mu^{s,t}_j$ and $A_{j,s}(t)$. It follows that for any child $i$ and any time~$t$,
\begin{align*}
\EE{e^{\lambda(\mu^{s,t}_i - v_i)}\bigg|A_{i,s}(t)}
\le \cfrac{1}{(1-\delta/t)^{C_t(s)}}\exp\pa{\abs{\lambda}\epsilon^{s,t}_i + \cfrac{\lambda^2}{2(1-\gamma)^2k^{s,t}_i}}
\end{align*} 
We combine this inequality with the Markov inequality to get
\begin{align*}
\PP{B_{i,s}(t)\bigg|A_s(t)} 
&\le \cfrac{2}{(1-\delta/t)^{C_t(s)}}\exp{\pa{-\cfrac{\frac{1}{(1-\gamma)^2}\frac{16\ln(t/\delta)}{k^{s,t}_i}}{\frac{2}{(1-\gamma)^2k^{s,t}_i}}}}&\\
&\le \cfrac{2}{(1-\delta/t)^{t}}\exp\pa{-4\ln(t/\delta)-4\ln(t/\delta)}&\text{ (we bound $C_t(s)$ by $t$)}\\
&\le 8\exp\pa{-\ln(t^4/\delta) - 4\ln(2)}&\text{ (we use that $t\ge2$)}\\
&\le \delta/\pa{2t^4}.
\end{align*}
We now define
\[B_{s}(t) = \medcap_{i,t}B_{i,s}(t)^c,\]
and use a union bound over all visited children $i$ and all times $t$ to get
\begin{align*}
\PP{B_{s}(t)^c\big|A_s(t)}
&\le \bigcup_{i,t}B_{i,s}(t) &\\
&\le \sum_t\sum_i\delta/\pa{2t^4}&\\
&\le \sum_t\delta/\pa{2t^3}&\text{ (there are at most $t$ visited children)}\\
&\le 2\delta\pi^2/(6t)&\\
&\le \delta/t&
\end{align*}
\end{proof}
%
%
%
\section{Consistency} 
\begin{lemma}\label{lemma:induc}
Let $C_t(s)$ be the number of explored nodes at time $t$ which are descendants of $s$. 
For all nodes $s$, time $t$, $\epsilon>0$, and integer $m$,
let $\estimate^{s,t}$  be the output of a call to the node $s$ at time $t$ with parameters $(m,\epsilon)$. There exists an event $A_s(t)$ such that 
\begin{flalign}
& \quad \textbf{Property A:} \quad \PP{A_s(t)}\ge 1-\delta\cfrac{C_t(s)}{C_t(s_0)} &
\end{flalign}
\begin{flalign}
& \quad \textbf{Property B:} \quad \forall\lambda,\EE{e^{\lambda\pa{\estimate^{\slocal, \tlocal} - v_s}}\bigg|A_s(t)} \le \cfrac{1}{(1-\delta/t)^{C_t(s)}}\exp{\pa{\epsilon\abs{\lambda} + \cfrac{\sigma^2\lambda^2}{2}}} &
\end{flalign}
\end{lemma}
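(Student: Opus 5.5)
We prove Lemma~\ref{lemma:induc} by bottom-up induction on the tree. By Lemma~\ref{lemma:finitetree} no node deeper than $h_{\max}$ is ever called, so the induction is well founded. The base case is a node that returns $1/(2(1-\gamma))$ because $\epsilon\ge 1/(2(1-\gamma))$. Since $v_s\in[0,1/(1-\gamma)]$, this output satisfies $\abs{\mu^{s,t}-v_s}\le\epsilon$ deterministically. We take $A_s(t)$ to be the sure event, and Property~B holds with $\sigma^2=0$. Throughout, $\sigma^2=1/(m(1-\gamma)^2)$ and $v_s=\Value{s}$.

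\textbf{\avgn nodes.} Let $s$ be an \avgn node called with $(m,\epsilon)$, and let $s'_1,\dots,s'_m$ be the first $m$ sampled transitions. The active child $s'$ is called with $(k,\epsilon/\gamma)$, where $k$ is its multiplicity. We set $A_s(t)=\medcap_{s'}A_{s'}(t)$. Property~A follows from a union bound together with $\sum_{s'}C_t(s')\le C_t(s)$. For Property~B we decompose
\[
\mu^{s,t}-v_s=(r-\EE{\reward{s}})+\frac{\gamma}{m}\sum_{j=1}^m\bigl(v_{s'_j}-(v_s-\EE{\reward{s}})/\gamma\bigr)+\frac{\gamma}{m}\sum_{s'}k\,(\nu_{s'}-v_{s'}).
\]
The first two terms are averages of $m$ independent bounded variables with range at most $1/(1-\gamma)$ in total, so Hoeffding's lemma applies to them. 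For the third term we condition on the multiset of sampled children. The calls to distinct children are independent, so the conditional MGF factorizes, and we apply the induction hypothesis to each child with $\lambda\gamma k/m$. This gives a bias of $\sum (k/m)\gamma(\epsilon/\gamma)\abs{\lambda}=\epsilon\abs{\lambda}$. The variance is $\sum_{s'}\gamma^2k^2/(m^2k(1-\gamma)^2)=\gamma^2/(m(1-\gamma)^2)$. The product of the prefactors $(1-\delta/t)^{-C_t(s')}$ is at most $(1-\delta/t)^{-C_t(s)}$. The point to check is that the variance from the transitions and rewards, $\le(1-\gamma^2)\cdot$const$/m$, combines with the $\gamma^2$ term to give at most $1/(m(1-\gamma)^2)$. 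With ranges $1$ for the reward and $\gamma/(1-\gamma)$ for the next-state value, the Hoeffding terms total $(1+\gamma^2/(1-\gamma)^2)/(4m)$. Added to $\gamma^2/(m(1-\gamma)^2)$, the total stays below $1/(m(1-\gamma)^2)$, which we will verify by algebra. Conditioning on $A_s(t)$ rather than on each $A_{s'}(t)$ is handled by the independence noted in the proof of Lemma~\ref{lemma:conf}.

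\textbf{\maxn nodes.} This is the main obstacle, because the number of calls to each child and the returned child are chosen adaptively. We define $A_s(t)$ as the intersection of the children's events $A_i(t)$ with the good event $B_s(t)$ from Lemma~\ref{lemma:conf}, on which $\abs{\mu_i-v_i}<\epsilon_i+U_i$ for all children and times. Property~A then follows by combining Lemma~\ref{lemma:conf} (cost $\delta/t$, charged to the new nodes counted in $C_t(s)$) with the union over children. On $B_s(t)$, the best child $i^\star$ is never eliminated, because $\epsilon_i\le\eta\max(U_i,\epsilon)\le U_i$ once $U_i\ge\epsilon$.

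There are then two exit cases. In the first, $\abs{\cL}=1$ and the surviving child is $i^\star$. The output is a fresh call to $i^\star$ with $(m,\epsilon)$, and Property~B follows from the induction hypothesis for $i^\star$ together with $v_{i^\star}=v_s$. For this step we must argue that conditioning on the elimination history, which depends on earlier samples of $i^\star$, does not break the MGF bound; the plan is to condition on the event and absorb it into the $(1-\delta/t)^{-1}$ factor. In the second case, all remaining $U_i\le\epsilon$, so $\abs{\max_{\cL}\mu_i-v_s}\le \eta\epsilon+\epsilon$ deterministically on the event. To meet the bias $\epsilon$, we will exploit the rescaled constants ($U_i$ uses $\frac{4}{(1-\eta)(1-\gamma)}$, so $\epsilon_i+U_i$ relative to the threshold yields precision $\epsilon$), and the bounded deviation then gives Property~B trivially with $\sigma^2\ge0$. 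Getting these constants to close exactly is where we expect most of the fiddling.
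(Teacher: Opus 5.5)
Your base case and your \maxn step follow the paper's route. $A_s(t)$ is the children's events intersected with the good event of Lemma~\ref{lemma:conf}. The extra $\delta/t$ is charged to $s$ itself via $C_t(s)=1+\sum_iC_t(i)$. Conditioning on the elimination history is absorbed into a factor $(1-\delta/t)^{-1}$, and the $(1-\eta)$ rescaling of $U$ gives precision $\epsilon$ at exit. Splitting on $\abs{\cL}$ at exit rather than on $\abs{\cG_s}$ is, if anything, cleaner, since the $\abs{\cL}>1$ exit is then settled by the deterministic bound alone.

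The genuine gap is in the \avgn step, at exactly the point you promise to ``verify by algebra''. With your own numbers the required inequality is
\[
\frac{\gamma^2}{(1-\gamma)^2}+\frac{1}{4}\Bigl(1+\frac{\gamma^2}{(1-\gamma)^2}\Bigr)\le\frac{1}{(1-\gamma)^2}
\quad\Longleftrightarrow\quad
(1-\gamma)^2+5\gamma^2\le 4
\quad\Longleftrightarrow\quad
6\gamma^2-2\gamma-3\le 0 .
\]
This holds only for $\gamma\le(1+\sqrt{19})/6\approx 0.89$. For discount factors close to $1$, the regime that matters, the induction does not close.

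This is not a matter of constants. If the children enter only through the induction hypothesis, their contribution is exactly $\gamma^2/(m(1-\gamma)^2)$. The fresh reward and transition noise must therefore fit into $(1-\gamma^2)/(m(1-\gamma)^2)=(1+\gamma)/(m(1-\gamma))$. The transition term alone can have \emph{variance} $\gamma^2/(4m(1-\gamma)^2)$: take $v_{\trans{s}}\in\{0,1/(1-\gamma)\}$ with probability $1/2$ each. This exceeds the budget as soon as $\gamma>2/\sqrt{5}$.

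Equivalently, with $S=m\sigma^2$, the recursion $S\mapsto\gamma^2S+c$ with $c\approx 1/(4(1-\gamma)^2)$ has a fixed point of order $(1-\gamma)^{-3}$. So ``Hoeffding on the fresh samples plus the black-box hypothesis on the children'' can certify at best $\sigma^2$ of order $1/(m(1-\gamma)^3)$.

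The loss comes from charging, at every level, both the full range $\gamma/(1-\gamma)$ of $v_{\trans{s}}$ and the full hypothesised noise of the children. These cannot both be maximal. Every output of the algorithm lies in $[0,1/(1-\gamma)]$, and for $K=1$ the estimator is exactly an average of $m$ i.i.d.\ truncated returns, with proxy $1/(4m(1-\gamma)^2)$.

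Reaching $\sigma^2=1/(m(1-\gamma)^2)$ needs a different ingredient. Options include an induction hypothesis that couples a subtree's noise to its range, or a trajectory-level martingale argument. Otherwise you must weaken $\sigma^2$ and propagate the change into $U$ and into $m$ in Algorithm~\ref{alg:meta}.

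The paper's own \avgn step does not supply this ingredient either. It bounds $\EE{e^{\lambda\frac{1}{m}\sum_ik_i\mu_i}\,|\,A_s(t)}$ with the $k_i$ treated as fixed and never accounts for the reward or transition noise. So you have located a real hole, but your proposed way of filling it fails.
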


\begin{proof}
We prove the lemma by a bottom-up induction on the tree. We know the tree is bounded, since its size is trivially bounded by $t$. 
We start proving that properties $A$ and $B$ hold for all the leaves and then for any node $s$ we assume it holds for its children and prove it for $s$. 

\paragraph{Leaf node:} Let $s$ be a leaf of the tree $\Tree_t$. The only case in which a node $s$ does not perform a call to another deeper node is when it is an \avgn node which has been called with parameter $\epsilon \ge \tfrac{1}{2(1-\gamma)}\cdot$ In this case the output $0$ is deterministic and therefore the two first properties hold by choosing $A_{s,t} = \Omega$. Furthermore,  $\tfrac{1}{2(1-\gamma)}$ is an upper bound on $\abs{\mu^{s,t}-v_s}$ which implies that the second property is also verified for $s$. 

\paragraph{\maxn node:} Let $s$ be a maximum node. 
 We define 
\[\hat{i} = \argmax_i\mu^{s,t}_i\quad \text{and} \quad i^* = \argmax_iV(i).\]
We also define the random variable $\mathcal{L}^{s,t}$ as the set of the potentially good children as defined in the algorithm. Note that $\cL^{s,t}$ is a random variable. 
\[ \mathcal{L}^{s,t} = \ac{i\text{ : }\estimate_i^{s,t} + 2\epsilon \ge \sup_j \left[\mu_j^{s,t} - 2\epsilon\right]}\]
We also define $\mathcal{G}$ (which is \emph{not} a random variable) as follows
\[ \mathcal{G}_s = \ac{i\text{ : }v_i + \epsilon \ge \max_iv_i - \epsilon}.\]
We use the definition of $B_{i,s}(t)$ and $B_s(t)$ of Lemma~\ref{lemma:conf} with $\mathcal{J} = \Child{s}$
From now, we assume that $B_{s}(t)^c$ holds, which means that all estimates are at all time within their confidence intervals. 
At the end of the while loop, any child $i$ in $\cL^{s,t}$ verifies
\[\epsilon^{s,t}_i \le \eta\epsilon\quad\text{and}\quad U^{s,t}_i \le (1-\eta)\epsilon.\]
Therefore, if $B_{s}(t)^c$ holds we have for any child $i\in\cL^{s,t}$
\[\abs{\mu^{s,t}_i - V(i)} \le \epsilon\]
We now distinguish two cases depending on the cardinality of $\cG_s$. 
\paragraph{First case, $\abs{\cG_s} = 1$}.
For all $i\in\cL^{s,t}$
\begin{align*}
\mu^{s,t}_i &\le V(i) + \epsilon&\text{ (because $B_{s}(t)^c$ holds)}\\
&\le V(i^*) - \epsilon&\text{ (by definition of $\cG_s$)}\\
&\le \mu^{s,t}_{i^*}&\text{ (because $B_{s}(t)^c$ holds)}
\end{align*}
If $\abs{\cL^{s,t}} = 1$, then $\cL^{s,t} = \ac{i^*}$, and consequently $i^*$ is called with parameters $(m, \epsilon)$. 
On the other hand, if $\abs{\cL^{s,t}} > 1$ then $i^*$ is called with parameters $(m', \epsilon')$ with 
\[\epsilon' \le \eta\epsilon \le\epsilon   \quad\text{and}\quad   m' \ge \max\pa{k^{s,t}_{i^*},m} \ge m.\]
Therefore on the event $B_s(t)$, $\mu^{s,t}_i$ is the output of a call to $i^*$ with parameters $(m',\epsilon')$ with $m'\ge m$ and $\epsilon'\le\epsilon$. 
Furthermore, we have
\begin{align*}
&\EE{e^{\lambda\pa{\mu^{s,t}_{i^*} - v_s}}|A_s(t)} \\
&\hspace{5mm}=\EE{e^{\lambda\pa{\mu^{s,t}_{i^*} - v_s}}|A_s(t)\cap B_s(t)}\PP{B_s(t)|A_s(t)} 
+ \EE{e^{\lambda\pa{\mu^{s,t}_{i^*} - v_s}}|A_s(t)\cap B_s(t)^c}\PP{B_s(t)^c|A_s(t)} \\
&\hspace{5mm}\ge \EE{e^{\lambda\pa{\mu^{s,t}_{i^*} - v_s}}|A_s(t)\cap B_s(t)}\pa{1-\delta/t}
\end{align*}
Now we use that $1+C_t(i^*) \le 1+\sum_iC_t(i)  = C_t(s)$ with the induction assumption on $i^*$ to get
\begin{align*}
\EE{e^{\lambda\pa{\mu^{s,t} - v_s}}|A_s(t)\cap B_s(t)}
&\le \cfrac{1}{1-\delta/t}\EE{e^{\lambda\pa{\mu^{s,t}_{i^*} - v_s}}|A_s(t)}\\
&\le \cfrac{1}{\pa{1-\delta/t}^{C_t(i^*)+1}}\exp\pa{\abs{\lambda}\epsilon + \cfrac{\lambda^2}{2(1-\gamma)m}}\\
&\le \cfrac{1}{\pa{1-\delta/t}^{t}}\exp\pa{\abs{\lambda}\epsilon + \cfrac{\lambda^2}{2(1-\gamma)m}}.
\end{align*}

\paragraph{Second case, $\abs{\cG_s} > 1$}
When $B_{s}(t)^c$ holds, we have that
\[\cG_s \subset \cL^{s,t}.\]
Indeed, for all $i\in\cG_s$,
\begin{align*}
\mu^{s,t}_i &\ge V(i) - \epsilon&\text{ (because $B_{s}(t)^c$ holds)}\\
&\ge V(i^*)-3\epsilon& \text{ (because $i\in\cG_s$)}\\
&\ge\mu^{s,t}_{i^*} - 4\epsilon &\text{ (because $B_{s}(t)^c$ holds)}
\end{align*}
As a result $\abs{\cL^{s,t}} > \abs{\cG_s} > 1$. The output is then the maximum of the estimates in $\cL^{s,t}$. The best estimate $\mu^{s,t}_{\hat{i}} $ in $\cL^{s,t}$ verifies 
\begin{align*}
\mu^{s,t}_{\hat{i}} 
&\ge \mu^{s,t}_{i^*}&\text{ (by definition of $\hat{i}$)}\\
&\ge V(i^*) - \epsilon&\text{ (because $B_{s}(t)^c$ holds)}.
\end{align*}
It also verifies that
\begin{align*}
\mu^{s,t}_{\hat{i}} 
&\le V(\hat{i}) + \epsilon&\text{ (because $B_{s}(t)^c$ holds)}\\
&\ge V(i^*) + \epsilon&\text{ (by definition of $i^*$)}.
\end{align*}
On event $B_s(t)^c$, when $\abs{\cG_s} > 1$, then
\[\abs{\mu^{s,t} - V(s)} \le \epsilon.\]
From this we get
\begin{align*}
\EE{e^{\lambda\pa{\mu^{s,t} - v_s}}|A_s(t)\cap B_s(t)}
&\le \exp\pa{\abs{\lambda}\epsilon}\\
&\le \exp\pa{\abs{\lambda}\epsilon + \cfrac{\lambda^2}{2(1-\gamma)m}}\\
&\le \cfrac{1}{\pa{1-\delta/t}^{t}}\exp\pa{\abs{\lambda}\epsilon + \cfrac{\lambda^2}{2(1-\gamma)m}}
\end{align*}
Therefore, in both cases, whether $\abs{\cG_s} = 1$ or $\abs{\cG_s} > 1$, we proved that
\[\EE{e^{\lambda\pa{\mu^{s,t} - v_s}}|A_s(t)\cap B_s(t)}\le \cfrac{1}{\pa{1-\delta/t}^{t}}\exp\pa{\abs{\lambda}\epsilon + \cfrac{\lambda^2}{2(1-\gamma)m}}.\]
Finally, we lower bound the probability of $A_s(t)\cap B_s(t)$, using $C_t(s) = 1 + \sum_iC_t(i)$.
\begin{align*}
\PP{\pa{A_s(t)\cap B_s(t)}^c}
&= \PP{A_s(t)^c\cup B_s(t)^c}\\
&= \PP{A_s(t)^c\cup (B_s(t)^c\cap A_s(t)) \cup (B_s(t)^c\cap A_s(t)^c)}\\
&= \PP{A_s(t)^c\cup (B_s(t)^c\cap A_s(t))}\\
&\le \PP{A_s(t)^c} + \PP{B_s(t)^c\cap A_s(t)}\\
&\le \PP{A_s(t)^c} + \PP{B_s(t)^c\big| A_s(t)}\\
&\le \sum_i \delta\frac{C_t(i)}{C_t(s_0)} + \delta/t 
\le \delta\frac{C_t(s) - 1}{C_t(s_0)} + \delta/C_t(s) 
\le \delta\frac{C_t(s)}{C_t(s_0)}
\end{align*}

\paragraph{\avgn node:} Let $s$ be an average node. 
We define $A_s(t) = \medcap_i A_{s,i}(t)$. By a union bound
\begin{align*}
\PP{A_s(t)} \ge 1 - \sum_i \PP{A_{s,i}(t)^c}
\ge 1 - \delta\frac{C_t(s) - 1}{C_t(s_0)}
\ge 1 - \delta\frac{C_t(s)}{C_t(s_0)}.
\end{align*}
Note that the output is the average of independent estimates. We denote by $k_i$, the number of times the transition to child $i$ has been made and $\mu_i$ the estimate of $i$. We have that
\begin{align*}
\EE{e^{\lambda\pa{\frac{1}{m}\sum_i k_i\mu_i}}\bigg|A_s(t)}
&=\prod_i\EE{e^{\lambda\frac{k_i}{m}\mu_i}\bigg|A_s(t)}\\
&=\prod_i\EE{e^{\lambda\frac{k_i}{m}\mu_i}\bigg|A_{s,i}(t)}\\
&\le \prod_i\cfrac{1}{(1-\delta/t)^{C_t(i)}}\exp\pa{\abs{\lambda}\frac{k_i}{m}\epsilon + \cfrac{\lambda^2k_i^2}{2(1-\gamma)m^2k_i}}\\
&\le \cfrac{1}{(1-\delta/t)^{\sum_iC_t(i)}}\exp\pa{\abs{\lambda}\sum_i\frac{k_i}{m}\epsilon + \sum_i\cfrac{\lambda^2k_i}{2(1-\gamma)m^2}}\\
&\le \cfrac{1}{(1-\delta/t)^{C_t(s)}}\exp\pa{\abs{\lambda}\epsilon + \cfrac{\lambda^2}{2(1-\gamma)m}}.
\end{align*}
 \end{proof}

\subsection{Proof of Theorem~\ref{thm:cons}}

The theorem is a direct application of Lemma~\ref{lemma:conf} in combination with Lemma~\ref{lemma:induc},  taking $\mathcal{J} = \ac{s_0}$. 

\section{Sample complexity}
\begin{theorem}
The number of calls $\tlocal$ to the model verifies with probability $1-\delta$ 

\[\EE{T} \le \cfrac{4\ln\pa{1/\epsilon}}{\pa{1-\gamma}^{2\lambda+1}} \cfrac{1}{\epsilon^2}\pa{\cfrac{16\ln\pa{\nalg/\delta}}{\epsilon^2}}^\lambda\cdot \]

\end{theorem}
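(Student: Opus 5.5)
The plan is to bound the work of any call by a recursion on the tree, using the fact that nodes are persistent. The key observation is that a node called repeatedly with parameters $(m_1,\epsilon_1),(m_2,\epsilon_2),\dots$ never draws more samples than a single fresh call with $(\max_j m_j,\min_j \epsilon_j)$ would. The reason is that an \avgn node only extends SampledNodes up to the largest $m$ requested, and a \maxn node only increments its counters $k_i$. So I will define $F(m,\epsilon)$ as an upper bound on the expected number of oracle calls generated in the subtree of a node whose cumulative requests are dominated by $(m,\epsilon)$. Throughout, I take $\lambda$ to be the exponent satisfying $K(\gamma/\eta)^{2\lambda}\le 1$ up to constants, which is the worst-case choice $\lambda\approx \ln K/(2\ln(\eta/\gamma))$. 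I then aim for the multiplicative ansatz
\[F(m,\epsilon)\le m\,\frac{c_h}{(1-\gamma)^{2\lambda+1}}\pa{\frac{16\ln(\nalg/\delta)}{\epsilon^2}}^{\lambda},\]
where $c_h$ grows at most linearly in the remaining depth $h$.

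\textbf{Step 1 (depth).} By Lemma~\ref{lemma:finitetree}, no node deeper than $h_{\max}=\cO\pa{\ln(1/\epsilon)/\ln(\eta/\gamma)}$ is called. With $\eta=\gamma^{1/\max(2,\ln(1/\epsilon))}$, the depth is $\cO(\ln(1/\epsilon)/\ln(1/\gamma))$. The induction therefore has finitely many levels, and the accumulated additive terms give the $4\ln(1/\epsilon)$ prefactor.

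\textbf{Step 2 (\avgn nodes).} Let an \avgn node be called with $(m,\epsilon)$. It makes at most $2m$ direct oracle calls (one transition and one reward per sample). It then calls each distinct child with $(k_i,\epsilon/\gamma)$, where $\sum_i k_i=m$. Because the ansatz is linear in $m$, the children cost at most $\sum_i F(k_i,\epsilon/\gamma)= F(m,\epsilon/\gamma)$. This linearity is exactly why the truncation to SampledNodes$(1:m)$ matters. The factor $\gamma^{2\lambda}$ gained from $\epsilon\to\epsilon/\gamma$ is what offsets the $K$ and $1/\eta$ factors lost at the next \maxn level.

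\textbf{Step 3 (\maxn nodes).} Take a \maxn node called with $(m,\epsilon)$. Each child $i$ is queried with increasing $k_i$ only while its width satisfies $\conf_i>\epsilon$. This forces
\[k_i\le k^\star:=\frac{16\ln(t/\delta)}{(1-\eta)^2(1-\gamma)^2\epsilon^2}+1,\]
and every such query has bias parameter at least $\eta\epsilon$. The final call to $l^\star$ uses $(m,\epsilon)$. Summing over the at most $K$ children, I get
\[F_{\max}(m,\epsilon)\le K\,F_{\text{avg}}(k^\star,\eta\epsilon)+F_{\text{avg}}(m,\epsilon).\]
Here $t\le \nalg\cdot$poly is bounded using the total sample count, which gives $\ln(\nalg/\delta)$. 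Combining Steps 2 and 3 over one \maxn/\avgn pair, the recursion has the form
\[F(m,\epsilon)\lesssim m + K\,k^\star\, G(\eta\epsilon/\gamma) + m\,G(\epsilon/\gamma),\]
where $G$ denotes $F$ per unit of $m$. I expect $k^\star=\Theta(1/\epsilon^2)$ to be the source of the extra $16\ln(\nalg/\delta)/\epsilon^2$ factor raised to the power $\lambda$, since at each level it is multiplied by $K$. Unrolling over $h_{\max}/2$ levels should give $(K\cdot 1/\epsilon^2)$-type products equal to $(1/\epsilon^2)^{\lambda}$.

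\textbf{Main obstacle.} The hardest step is closing the induction for the \maxn term $K\,k^\star\,G(\eta\epsilon/\gamma)$. The issue is that $k^\star$ may exceed $m$, so a single level can inflate the $m$-dependence. I would handle this by proving the ansatz in the form $F(m,\epsilon)\le (m+k^\star(\epsilon))\,G(\epsilon)$, that is, absorbing $m\le \nalg$ and $k^\star$ into the $\pa{16\ln(\nalg/\delta)/\epsilon^2}^{\lambda}$ factor. I would then check that $K(\gamma/\eta)^{2\lambda}\le1$ makes the geometric sum over depths collapse to a factor of order $h_{\max}\le 4\ln(1/\epsilon)$. Finally, the $(1-\eta)^{-2}$ factors must be controlled via $\eta=\gamma^{1/\ln(1/\epsilon)}$, which should cost only polylogarithmic terms, while the powers of $(1-\gamma)$ combine into $(1-\gamma)^{-(2\lambda+1)}$. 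The high-probability qualifier comes from working on the event of Lemma~\ref{lemma:conf}, on which all widths are valid and the argmin selection rule stops sampling a child once its width falls below $\epsilon$.
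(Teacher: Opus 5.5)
There is a genuine gap, and it begins with how you read $\lambda$. You take $\lambda$ to be a worst-case constant, $\lambda\approx\ln K/(2\ln(\eta/\gamma))$. In the paper, $\lambda$ is problem-dependent: the proof uses ``the definition of $\lambda$'' as, in effect, $\sum_{s\in\noset_h}p(s)e^{u\,q(s)}\le e^{\lambda u}$ for all $u$. Here $q(s)$ counts the $s$-undecidable \maxn ancestors of $s$, i.e.\ those where more than one child survives the elimination loop.

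With your constant $\lambda$ the inequality is false. Take continuous transitions ($\N=\infty$, every sample is a fresh child) and all node values equal. On the event of Lemma~\ref{lemma:conf}, a child whose width still exceeds $\epsilon$ can then never be eliminated. So every \maxn node reached with bias $\epsilon_j$ drives at least $K-1$ children up to $k^\star(\epsilon_j)\asymp\ln(1/\delta)/((1-\eta)^2(1-\gamma)^2\epsilon_j^2)$, the last call having bias $\eta\epsilon_j$. Each of these \avgn children samples $k^\star(\epsilon_j)$ distinct \maxn grandchildren. Each grandchild is called with $k=1$, yet runs its whole elimination loop at $\epsilon_{j+1}=\eta\epsilon_j/\gamma$, because the loop does not depend on $\nalg$. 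The cost is therefore at least $\prod_{j<J}(K-1)k^\star(\epsilon_j)$, with $J\asymp\ln(1/\epsilon)/\ln(1/\gamma)$ levels before the $1/(2(1-\gamma))$ cut-off. This is $(1/\epsilon)^{\Omega(\ln(1/\epsilon))}$, not $\tcO(\epsilon^{-2-2\lambda})$; the geometric growth of $\epsilon_j$ only turns $(1/\epsilon^2)^J$ into roughly $(1/\epsilon)^J$.

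This is exactly why your ``main obstacle'' cannot be closed. The undecidable-\maxn term $K\,k^\star(\epsilon)\,G(\eta\epsilon/\gamma)$ does not scale with $\nalg$. Whenever an \avgn node hands its children $k_i=1$, that term acts as a per-level factor of order $K/\epsilon_j^2$ per unit of $\nalg$. The $(\gamma/\eta)^{2\lambda}$ gain cannot offset it. Nor can the ansatz $(\nalg+k^\star)G$, since summing $k_i+k^\star$ over up to $\nalg$ distinct children gives back $\nalg(1+k^\star)$.

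What is missing is the paper's path-wise accounting. On the good event only near-optimal nodes are explored. For each $s\in\noset_h$, one tracks $R_s$, the largest $\nalg$-parameter $s$ ever receives, along the path from the root, with three cases:
\begin{itemize}
\item an \avgn ancestor multiplies $\EE{R}$ by the transition probability;
\item a decidable \maxn ancestor passes the parameter through unchanged ($R_{s_{h'+1}}=R_{s_{h'}}$), since its unique survivor is called with the parent's $(\nalg,\epsilon)$;
\item only an undecidable ancestor costs the factor $X=\frac{18\ln(t/\delta)}{(1-\eta)^2(1-\gamma)^2\epsilon^2}$.
\end{itemize}
This yields $\EE{R_s}\le\nalg\,p(s)X^{q(s)}$. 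The exponential-moment definition of $\lambda$ with $e^{u}=X$ then bounds $\sum_{s\in\noset_h}p(s)X^{q(s)}$ by $X^\lambda$. Summing over $h\le h_{\max}$ and substituting $\nalg=\ln(1/\delta)/((1-\gamma)^2\epsilon^2)$ gives the stated bound. The $1/\epsilon^2$ factor is paid once per undecidable node, and $\lambda$ controls how many such nodes there are. It cannot come from a cancellation between $K$ and $\gamma^{2\lambda}$, and your proposal never uses near-optimality or the decidable/undecidable distinction.
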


\begin{proof}
By Lemma~\ref{lemma:conf} we consider the event that every output $\estimate$ is within its confidence interval at all time.  Any node which is clearly sub-optimal is not explored by the algorithm on this event. 
We are going to the maximum $m$ parameter that a near-optimal node of depth $h$ is called with. This is a bound on the number of called to the model this node has performed. 

By Lemma~\ref{lemma:finitetree}, no node of depth $h > h_{\text{max}}$ is ever reached by the algorithm. 
Let $h,h',H$ be three non negative integers such that $h\le h_{\text{max}}$, $h'\le h$ and $H<h$. Furthermore, let $\slocal$ be a node of depth $h$ and $\slocal_{h'}$ be the ancestor of $\slocal$ of depth $h'$, so that $\slocal_{h} = s$ and $\slocal_{0}$ is the root. We also define $\qfun\pa{\slocal,h',H}$ as the number of nodes on the path from $\slocal_{h-H}$ to $\slocal_{h'}$ which are $\slocal$-undecidable such that $\qfun\pa{\slocal,0,H} = \qfun\pa{\slocal,H}$. For any node $\slocal$ we note $R_\slocal$ the random variable of the maximal parameter $m$ $s$ has been called with. 
We we would like to prove by induction on $h'$ that for any $h',H$
\[ \EE{R_\slocal} = \EE{R_{\slocal_{h'}}}\sP{\slocal|{\slocal_{h'}}}\pa{\cfrac{18\ln(t/\delta)}{(1-\eta)^2(1-\gamma)^2\epsilon^2}}^{\qfun\pa{\slocal,h',H}}\pa{\cfrac{18\ln(t/\delta)}{(1-\eta)^2\gamma^{2H}}}^{H}.\] 
We first fix $H=0$. 
For $h'=h$, this is trivial as $\qfun\pa{\slocal,h,0} = 0$ and $\sP{\slocal|\slocal} = 1$. 
Now let us assume that the property holds for some $h'+1\le h_{\text{max}} +1$ and let us prove it for $h$. There are two cases, either $\slocal_{h'}$ is an \avgn node or $\slocal_{h'}$ is a \maxn node. 
If $\slocal_{h'}$ is an \avgn node then
\[\EE{R_{\slocal_{h'+1}}} = \EE{R_{\slocal'\pa{h'}}}\sP{\slocal_{h'+1}|\slocal_{h'}}\]
Then we can use our induction assumption to get
\[\EE{R_\slocal} = \EE{R_{\slocal_{h'+1}}}\sP{\slocal|{\slocal_{h'+1}}}\pa{\cfrac{18\ln(t/\delta)}{(1-\eta)^2(1-\gamma)^2\epsilon^2}}^{\qfun\pa{\slocal,h'+1,0}}.\]
Furthermore, since $\slocal_{h'}$ is an \avgn node, $\qfun\pa{\slocal,h'+1,0} = \qfun\pa{\slocal,h',0}$ and 
\[\sP{\slocal|\slocal_{h'+1}}\sP{\slocal_{h'+1}|\slocal_{h}} = \sP{\slocal|\slocal_{h}}.\] 
Combining the previous equalities, we get the following
\[\EE{R_\slocal} = \EE{R_{\slocal_{h'}}}\sP{\slocal|{\slocal_{h'}}}\pa{\cfrac{18\ln(t/\delta)}{(1-\eta)^2(1-\gamma)^2\epsilon^2}}^{\qfun\pa{\slocal,h',0}}.\]
If $\slocal_{h'}$ is a \maxn node, then there are two sub-cases, depending on whether $\slocal$ is $h'$-decidable or $h'$-undecidable. 
If $\slocal_{h'}$ is $\slocal$-decidable this means that at the end of the while loop of the \maxn node, only the path towards $\slocal$ will remain in $\mathcal{L}$ because we are on the event when all outputs are within their confidence intervals. For any call with parameters $\pa{\klocal, \elocal}$ done to $\slocal_{h'}$ a call with with the same parameters in done to $\slocal_{h'+1}$. From this, we get that 
\[R_{\slocal_{h'+1}} = R_{\slocal'_{h'}}\]
Because $\slocal$ is $h'$-decidable, we have that $\qfun\pa{\slocal,h'+1} = \qfun\pa{\slocal,h'}$. Furthermore, since $\slocal_{h'}$ is a \maxn node, we have that $\sP{\slocal_{h'+1}|\slocal_{h'}} = 1$. We combine this with our induction assumption to get
\begin{align*}
\EE{R_\slocal} &\le \EE{R_{\slocal_{h'+1}}}\sP{\slocal|\slocal_{h'+1}}\pa{\cfrac{18\ln(t/\delta)}{(1-\eta)^2(1-\gamma)^2\epsilon^2}}^{\qfun\pa{\slocal,h'+1}}\\
&= \EE{R_{\slocal_{h'}}}\sP{\slocal|\slocal_{h'}}\pa{\cfrac{18\ln(t/\delta)}{(1-\eta)^2(1-\gamma)^2\epsilon^2}}^{\qfun\pa{\slocal,h'}}.
\end{align*}
If $\slocal$ is $h'$-undecidable, this means that at the end of the loop $U_i^{\slocal, \tlocal} \le (1-\eta)\epsilon$. Also $\klocal_i$ only increase one by one in the loop therefore
\[\cfrac{16\ln(t/\delta)}{(1-\eta)^2(1-\gamma)^2\epsilon^2} \le  \klocal_i < 1+ \cfrac{16\ln(t/\delta)}{(1-\eta)^2(1-\gamma)^2\epsilon^2}\]
As $\delta \le 1$ and $\epsilon \le \frac{1}{1-\gamma}$ (otherwise the problem is trivial) we have for any $\tlocal\ge2$
\[\klocal_i < \cfrac{18\ln(t/\delta)}{(1-\eta)^2(1-\gamma)^2\epsilon^2}\cdot\]
From the above we deduce that 
\begin{align*}
\EE{R_{\slocal_{h'+1}}} &\le \PP{R_{\slocal_{h'+1}} > 0}\cfrac{18\ln(t/\delta)}{(1-\eta)^2(1-\gamma)^2\epsilon^2}
\le \EE{R_{\slocal_{h'+1}}}\cfrac{18\ln(t/\delta)}{(1-\eta)^2(1-\gamma)^2\epsilon^2}\cdot
\end{align*}
Furthermore, since $\slocal_{h'}$ is $\slocal$-undecidable, we have $\qfun\pa{\slocal,h',H}=\qfun\pa{\slocal,h'+1,H}+1$. Also, because $\slocal-{h'}$ is a \maxn node, we have $\sP{\slocal_{h'+1}\|\slocal_{h'}} = 1$. We combine this with our induction assumption to prove our induction property for $\slocal-{h'}$ and that finish the induction when $H=0$. We generalize for any $H$ with exactly the same reasoning over $H$.
Now, taking $h' = 0$ we have 
\begin{align*}
\EE{R_\slocal} &= \EE{R_{\slocal_{0}}}\sP{\slocal|\slocal\pa{h'}}\pa{\cfrac{18\ln(t/\delta)}{(1-\eta)^2(1-\gamma)^2\epsilon^2}}^{\qfun\pa{\slocal,0}} 
= m\sP{\slocal}\pa{\cfrac{18\ln(t/\delta)}{(1-\eta)^2(1-\gamma)^2\epsilon^2}}^{\qfun\pa{\slocal}}\cdot
\end{align*}
Let $R\pa{h}$ be the number of times any node of depth $h$ has been reached. We have that
 \begin{align*}
 \EE{R\pa{h}} &= \EE{\sum_{\slocal\in\mathcal{N}_h} R_\slocal} = \sum_{\slocal\in\mathcal{N}_h} \EE{R_\slocal}
  \le m\sum_{\slocal\in\mathcal{N}_h}\sP{\slocal}\pa{\cfrac{18\ln(t/\delta)}{(1-\eta)^2(1-\gamma)^2\epsilon^2}}^{\qfun\pa{\slocal}} \cdot
 \end{align*}
By the definitions of $\lambda$ and $C$  we have 
\[ \forall \freevar \quad \EEs{\slocal}{e^{\freevar\qfun\pa{\slocal}}} \le e^{\lambda \freevar}\hspace{5mm}\abs{\mathcal{N}_h} \le C.\]
By taking $e^\freevar = \cfrac{18\ln(t/\delta)}{(1-\eta)^2(1-\gamma)^2\epsilon^2}$ we have that
 \[\EE{R\pa{h}} \le \nalg\pa{\cfrac{18\ln(t/\delta)}{(1-\eta)^2(1-\gamma)^2\epsilon^2}}^{\lambda}\cdot\]
 We can finally bound the total number of calls $\tlocal$ to the generative model as 
 \begin{align*}
 \EE{T} &\le \sum_{h\le h_{\text{max}}} \EE{R\pa{h}}\\
 &\le h_{\text{max}}m\pa{\cfrac{18\ln(t/\delta)}{(1-\eta)^2(1-\gamma)^2\epsilon^2}}^{\lambda}\\
 &\le 2m\frac{\ln\pa{1/\epsilon} + \ln\pa{1/\pa{1-\gamma}}}{\ln\pa{\eta/\gamma}}\pa{\cfrac{18\ln(t/\delta)}{(1-\eta)^2(1-\gamma)^2\epsilon^2}}^{\lambda}\\
 &\le 2\cfrac{\ln(1/\delta)}{(1-\gamma)^2\epsilon^2}\frac{\ln\pa{1/\epsilon} + \ln\pa{1/\pa{1-\gamma}}}{\ln\pa{\eta/\gamma}}\pa{\cfrac{18\ln(t/\delta)}{(1-\eta)^2(1-\gamma)^2\epsilon^2}}^{\lambda}\cdot\\ 
 \end{align*}
Setting $\eta = \gamma^{\max\pa{1,\frac{1}{\ln(1/\epsilon)}}}$ gives the bound. 
\end{proof}

\subsection{Proof of Theorem~\ref{thm:ncalls}}
\begin{proof}
Any near-optimal node of depth $h$ generates at most $\frac{18\ln(t/\delta)}{(1-\eta)^2(1-\gamma)^2\epsilon^2}$ samples. Therefore the maximum number of samples is
\begin{align*}
n &\le \sum_{h\le h_{\text{max}}}N^h\kappa^h\cfrac{18\ln(t/\delta)}{(1-\eta)^2(1-\gamma)^2\epsilon^2}\\
&=\cO\pa{h_{\text{max}}(N\kappa)^{h_{\text{max}}}\cfrac{\ln(t/\delta)}{(1-\eta)^2\epsilon^2}}\\
&=\cO\pa{h_{\text{max}}(1/\epsilon)^{\ln(N\kappa)/\ln(\eta/\gamma)}\cfrac{\ln(t/\delta)}{(1-\eta)^2\epsilon^2}}\cdot
\end{align*}
Setting $\eta = \gamma^{\max\pa{1,\frac{1}{\ln(1/\epsilon)}}}$ gives the bound. 
\end{proof}

\end{document}